\providecommand{\figurepath}{./Figures}
\numberwithin{equation}{section}
\numberwithin{figure}{section}
\numberwithin{table}{section}
\renewcommand{\arraystretch}{1.2}
\def\algbackskip{\hskip-\ALG@thistlm}
\definecolor{lightgray}{gray}{0.9}
\definecolor{bluegreen}{rgb}{0.0, 0.87, 0.87}
\newtheorem{lemma}{Lemma}
\newtheorem{remark}{Remark}
\newtheorem{definition}{Definition}
\newtheorem{example}{Example}
\newtheorem{theorem}{Theorem}
\newcommand*{\bl}[1]{\mathbf{#1}}
\newcommand{\Hamiltonian}{\mathcal{H}}
\DeclareMathOperator{\Diff}{d}
\newcommand{\linearembs}{{\texttt{s-linear-embs}}}
\newcommand{\quadembs}{{\texttt{quad-embs}}}
\newcommand{\cubicembs}{{\texttt{s-cubic-embs}}}
\newcommand{\opinf}{{\texttt{OpInf-Ham}}}
\newcommand{\ddt}{\dfrac{\mathrm d}{\mathrm{d} t}}
\begin{document}
  

\title{Deep Learning for Structure-Preserving Universal Stable Koopman-Inspired Embeddings for Nonlinear Canonical Hamiltonian Dynamics}

\author[$\ast\ast$]{Pawan Goyal}
\affil[$\ast\ast$]{Max Planck Institute for Dynamics of Complex Technical Systems, 39106 Magdeburg, Germany.\authorcr
	\email{goyalp@mpi-magdeburg.mpg.de}, \orcid{0000-0003-3072-7780}
}

\author[$\ast$]{Süleyman Y\i ld\i z}
\affil[$\ast$]{Max Planck Institute for Dynamics of Complex Technical Systems, 39106 Magdeburg, Germany.\authorcr
	\email{yildiz@mpi-magdeburg.mpg.de}, \orcid{0000-0001-7904-605X}
}
 
\author[$\dagger\ddagger$]{Peter Benner}
\affil[$\dagger\ddagger$]{Max Planck Institute for Dynamics of Complex Technical Systems, 39106 Magdeburg, Germany.\authorcr
  \email{benner@mpi-magdeburg.mpg.de}, \orcid{0000-0003-3362-4103}
}
\affil[$\dagger\ddagger$]{Otto von Guericke University,  Universit\"atsplatz 2, 39106 Magdeburg, Germany\authorcr
  \email{peter.benner@ovgu.de} 
  \vspace{-0.5cm}
}
  
\shorttitle{Structure-Preserving Universal Stable Embeddings}
\shortauthor{P. Goyal, S. Y\i ld\i z, P. Benner}
\shortdate{}
  
\keywords{Canonical Hamiltonian systems, nonlinear systems, linear systems, symplectic transformation, Koopman operator, lifting-principle, continuous spectra.}

  
\abstract{%
	
		Discovering a suitable coordinate transformation for nonlinear systems enables the construction of simpler models, facilitating prediction, control, and optimization for complex nonlinear systems. To that end,  Koopman operator theory offers a framework for global linearization for nonlinear systems, thereby allowing the usage of linear tools for design studies. 	In this work, we focus on the identification of global linearized embeddings for canonical nonlinear Hamiltonian systems through a symplectic transformation. While this task is often challenging, we leverage the power of deep learning to discover the desired embeddings.	Furthermore, to overcome the shortcomings of Koopman operators for systems with continuous spectra, we apply the lifting principle and learn global cubicized embeddings. Additionally, a key emphasis is paid to enforce the bounded stability for the dynamics of the discovered embeddings. We demonstrate the capabilities of deep learning in acquiring compact symplectic coordinate transformation and the corresponding simple dynamical models, fostering data-driven learning of nonlinear canonical Hamiltonian systems, even those with continuous spectra.
}

\novelty{
	\begin{itemize}
		\item Inspired by the Koopman operator theory, in this work, we address a data-driven discovery of global linearized embeddings for canonical Hamiltonian systems via a symplectic transformation. 
		\item To address the limitation of Koopman operator theory for the systems with continuous spectra, we propose the discovery of global cubicized embeddings, influenced by the lifting principle. 
		\item An essential aspect of our study involves enforcing the dynamic stability of the learned embeddings.
		\item We harness the power of deep learning, particularly auto-encoders, to effectively learn the desired embeddings.
		\item To demonstrate the effectiveness of our proposed methodologies, we provide several examples, including those in high-dimension, to show their applications and capabilities.
	\end{itemize}
} 
\maketitle

\section{Introduction}

Hamiltonian mechanics provides a potent mathematical framework for describing the intricate behaviors of diverse physics phenomena across a spectrum of scientific disciplines. This framework finds its applications in physics, engineering, and chemistry. Noteworthy applications span from the realms of solid-state physics, mechanical structures, and robotics to celestial mechanics, climate modeling, and beyond. 
Hamiltonian systems are characterized by their ability to conserve fundamental physical quantities, such as energy, momentum, symplecticity, and symmetries. As a result, it makes them a natural framework for representing  a diverse range of physical phenomena \cite{leimkuhler2005simulating, marsden2013introduction,arnol2013mathematical}. Leveraging these inherent properties during the course of engineering studies (e.g., control, prediction, and optimization) and modeling enables us to attain our objectives effectively. 

To capture complex nonlinear dynamics, nonlinear modeling is a key tool. While physics-based nonlinear dynamic modeling has historically been employed to comprehend complex processes, it is a formidable challenge to describe the underlying dynamics completely and accurately. 
With the surge in sensor technology, data availability has been more than ever. This, in turn, has intensified the pursuit of developing data-driven dynamic models that capture the  underlying nonlinear behavior.

Learning nonlinear models using data has remarkably progressed in recent years---from classical techniques to neural networks, see, e.g., \cite{NarP90,rico1994continuous,VanM96,SuyVdM96,lennart1999system,brunton2016discovering,peherstorfer2016data,chen2018neural,raissi2019physics}. However, the quest for a comprehensive framework for nonlinear system modeling remains ongoing, particularly, towards utilizing physics-knowledge at an abstract level to enhance interpretability and generalizability. In this work, we seek to explore the characteristics of Hamiltonian dynamics. 
To that end, we begin by highlighting the Hamiltonian neural network (HNN) framework~\cite{greydanus2019hamiltonian}, in which the central idea is energy-based modeling. That is, instead of modeling the vector field with neural networks, the underlying Hamiltonian is learned using a neural network, and then the dynamical model is constructed so that it is symplectic. HNN has inspired several studies to generalize the framework to dissipative systems \cite{zhong2020dissipative}, control systems \cite{zhong2019symplectic}, port-Hamiltonian systems \cite{desai2021port,duong2021hamiltonian}, and Poisson systems \cite{jin2022learning}. Moreover, the operator inference framework \cite{peherstorfer2016data}  has been extended to Hamiltonian systems \cite{sharma2022hamiltonian,gruber2023canonical,Sharmaetal23,SK_LagrangianOPINF} but they require a prior model hypothesis. 
However, engineering design studies using nonlinear models are challenging. Therefore, encapsulating nonlinear dynamics within a weakly nonlinear framework, or even within a linear framework, is appealing since it offers us to employ tools for weakly nonlinear systems.

In this direction, Koopman operator theory \cite{koopman1931hamiltonian,Koopman32} provides a methodology, indicating that nonlinear systems can be globally linearized in suitable coordinate systems or embeddings, and it can be of an infinite dimensional. However, dealing with infinite-dimensional space, though conceptually intriguing, poses challenges, thus motivating us to obtain a finite-dimensional approximation. 
One popular technique is the dynamic mode decomposition (DMD) \cite{rowley2009spectral,schmid2010dynamic,kutz2016dynamic}, which provides a means to achieve this goal. Despite its popularity, it may fall short in capturing highly nonlinear dynamics as it relies on linear measurements. To further enrich measurements, Extended DMD (eDMD) \cite{williams2015data} has been developed by defining hand-design nonlinear observers and using them to learn a linear operator through the classical DMD approach. 
A parallel idea can be found in the variational approach to conformation dynamics \cite{noe2013variational,nuske2014variational,nuske2016variational}, which also enriches models with nonlinear observers, akin to the spirit of eDMD. Nevertheless, there may still be closure-related issues with regard to the Koopman invariant subspace \cite{brunton2016koopman}. Furthermore, these observers can be high-dimensional, which can be tackled using kernel methods~\cite{Williams_kernel_15}. However, this might lead to an un-interpretable outcome or present challenges in reconstructing the relevant quantity of the interest. 
Moreover, these above-mentioned methodologies do not explicitly make use of the properties related to canonical Hamiltonian dynamics present in the data; hence, the globally linearized models are prone to not adhering to the physical properties of Hamiltonian dynamics.

In this work, we focus on identifying global linearized Koopman embeddings for canonical Hamiltonian dynamics. For this, we harness the capabilities of deep neural networks (DNNs) as a powerful computational tool. In the past, DNNs have been used to learn Koopman embeddings for nonlinear systems, see, e.g., \cite{wehmeyer2018time,mardt2018vampnets,Takeishietal_17,yeung2019learning,otto2019linearly,li2017extended,lusch2018deep} but without making use of Hamiltonian properties.
Recently, the work \cite{zhang2022hamiltonian} discusses learning Koopman embeddings while focusing on energy preservation. However, it does not use the symplectic property of canonical Hamiltonian systems, and thus, it does not guarantee that the auto-encoder defining the embeddings is a symplectic auto-encoder, e.g., \cite{buchfink2021symplectic,yildizetal23}. Furthermore, it does not guarantee bounded stability for the dynamics of the embeddings. 
To address this, we propose identifying Koopman-inspired global linearized embedding for canonical Hamiltonian systems so that the embedding dynamics are also canonical, and a symplectic nonlinear transformation yields embeddings. Additionally, we discuss how to enforce the bounded stability for the embeddings, which is arguably the most critical property related to dynamical systems.

Furthermore, for nonlinear systems, exhibiting continuous spectra of eigenvalues, obtaining finite-dimensional  Koopman embeddings is rather difficult. To deal with it, the work \cite{lusch2018deep} proposed the use of an auxiliary neural network designed to model the eigenvalues of linearized operators. In this case, the dynamics for the embeddings are not purely linear, as the operator depends on the output of the neural network, which is actually nonlinear. 
To alleviate the dependency on the neural network, we alternatively deal with the systems of continuous spectra by constructing weakly nonlinear systems. It relies on the framework of the lifting principle. It guides us that smooth nonlinear systems, including continuous spectra ones, can be exactly written as polynomial systems with a finite-dimensional embedding. Moreover, polynomial systems can exactly be written as finite-dimensional quadratic or cubic polynomial systems, see, e.g., \cite{savageau1987recasting,morGu11}. 
These principles to learn quadratic systems or embeddings have been used to learn nonlinear dynamics systems, see \cite{qian2020lift}. Further, learning a suitable quadratic embedding using auto-encoders has been discussed in \cite{goyal2022generalized}, which has been extended to canonical Hamiltonian systems while carefully treating the properties of the Hamiltonian dynamics \cite{yildizetal23}. However, a major drawback of learning quadratic embeddings in the context of canonical Hamiltonian systems is the preservation of bounded stability. To that end, we discuss a global cubicized framework for canonical systems that not only allows the construction of stable dynamics for embeddings but aims to mitigate the problem with continuous spectra in the classical Koopman framework as well. By embracing this framework, we seek to enhance the understanding and applicability of canonical Hamiltonian systems within a broader context.

The remaining paper is organized as follows. In \Cref{sec:sym_transformation}, we recall the definition of canonical Hamiltonian systems and the property of a symplectic transformation that is used to enforce the auto-encoder to be symplectic. In \Cref{sec:data_driven_modeling}, we discuss a Koopman-inspired global linearized framework for canonical Hamiltonian systems while preserving their properties in the Koopman embeddings. We also emphasize how to ensure bounded stability for the dynamics of the embeddings. Moreover, we address the continuous spectra by means of learning weakly nonlinear systems---that is, a cubic nonlinear system with the Hamiltonian being a quartic function and discuss their stability as well. \Cref{sec:num} presents numerical experiments, illustrating the performance of the proposed methodologies and presenting a comparison with \cite{yildizetal23}. Additionally, in \Cref{sec:high_dimensional_case}, we discuss how to efficiently employ the proposed methodologies to high-dimensional data and present a comparison with \cite{Sharmaetal23,SK_LagrangianOPINF}. In \Cref{sec:conclusions}, we conclude the paper with a summary and avenues for future research. 
%
\section{Symplectic Transformation}\label{sec:sym_transformation}
Our interest lies in learning canonical Hamiltonian systems of the form:
\begin{equation}
\label{eq:HamiltonianEquations}
\dot{\bx}(t) = \bJ_{2n} \nabla_\bx \bH(\bx(t)) \in \R^{2n}, \quad \bx(0) = \bx_0,
\end{equation}
where the state $\bx \in \R^{2n}$ contains generalized position $\bq\in \Rn$ and generalized momenta $\bp\in \Rn$, and $\bJ_{2n} $ is a symplectic matrix, i.e.,
$\bJ_{2n} := \begin{bmatrix} \mathbf{0} & \bI_n\\ -\bI_n & \mathbf{0} \end{bmatrix} \in \R^{2n \times 2n}$,
and $\nabla_\bx$ denotes the gradient with respect to $\bx$. Moreover, $\bx_0 = (\bq_0,\bp_0) \in \R^{2n}$ denotes an initial condition. Furthermore, the Hamiltonian function $\bH \colon \R^{2n} \to \R$ defines the Hamiltonian of the system and is constant along the solution trajectories. 
Next, we recall a definition of a symplectic transformation from, e.g.,  \cite{yildizetal23}.
\begin{definition}[e.g., \cite{yildizetal23}]
	A map $\boldsymbol{\psi}$ is a symplectic transformation of $2n$-dimensional to $2m$-dimensional when the following condition is  fulfilled:
	\begin{equation}\label{eq:symplecticEmbedding}
	(\Diff \boldsymbol{\psi}_\bx)^\top \bJ_{2m} \Diff \boldsymbol{\psi}_\bx = \bJ_{2n}, \qquad \forall~\bx \in \R^{2n},
	\end{equation}
where 	$\Diff \boldsymbol{\psi}_\bx \in \R^{2m \times 2n}$ is the Jacobian of $ \boldsymbol{\psi}$ with respect to $\bx$. Furthermore, when $m \geq 0$, we  refer to it as a \emph{symplectic lifting} \cite{yildizetal23}. 
\end{definition}

\section{Data-Driven Canonical Hamiltonian Systems}\label{sec:data_driven_modeling}
Here, we set up our problem of learning canonical Hamiltonian systems using data. Specifically, in this work, we focus on continuous-time dynamical systems of the form:
\begin{equation}
\ddt \bx(t) = \mathbf f(\bx(t)),
\end{equation}
where $\bx(t)$ is the state of the system, and the function $\mathbf f$---also referred to as vector field---defines its time evolution for a given initial condition and is often nonlinear. Additionally, we assume the state $\bx(t)$ to be low-dimensional; however, for high-dimensional $\bx$, we can obtain a low-dimensional representation by leveraging coherent structures in the spatial space, which we shall discuss later in detail. Our primary objective is to learn the function $\mathbf{f}$ from the given data $\mathbf{x}$, where $\mathbf{f}$ can be arbitrarily complex and nonlinear.

In this work, we draw inspiration from the Koopman operator theory \cite{koopman1931hamiltonian} and lifting principles \cite{savageau1987recasting,morGu11} to achieve our goal. In particular, we aim to learn the time evolution of $\mathbf{x}$ through suitable observers, employing concepts from these theories to address our learning task.

\subsection{Koopman-inspired linear symplectic representations and its stability}\label{subsec:linear_stability}
The Koopman theory offers a valuable approach to represent nonlinear dynamics as linear dynamical systems by means of the Koopman operators, thus allowing predictions and control for nonlinear systems by employing tools from linear theory. In particular, the Koopman theory indicates that it is possible to define a set of observers $\by$ as a function of $\bx$, such that the evolution of these observers can be described by a linear dynamical system, i.e., 
\begin{equation}
	\dfrac{d}{dt} \by = \mathbf{\cK} \by,
\end{equation}
where $\cK$ is a Koopman operator. However, determining those observers is a challenge.  In this paper, these observers are referred to as embeddings. 

In this work, we harness the capabilities of deep learning to identify finite-dimensional Koopman embeddings whose dynamics can be described by linear systems. In other words, we aim to approximate the infinite-dimensional Koopman operator by a finite-dimensional one. 
While several deep learning approaches have been explored in related contexts, none have specifically addressed canonical Hamiltonian systems. To that end, we aim to design embeddings $\by = \boldsymbol{\phi}(\bx)$, where $\boldsymbol{\phi}$ can be regarded as construction of embeddings based on the measurement $\bx$ so that their dynamics can be given by a linear system of the form \eqref{eq:HamiltonianEquations}.
Furthermore, it is crucial to enforce the requirement that the mapping of measurements to embeddings is a symplectic transformation. 
%
To accomplish these goals, the following objectives are pursued. 
\begin{itemize}
	\item \textbf{Learning linear representation.} Our goal is to learn $2m$-dimensional embeddings $\by$ from the measurements $\bx$ using a mapping function $\boldsymbol{\phi}$, i.e., $\by = \boldsymbol{\phi}(\bx)$, and $\bx$ can be recovered using $\bx = \boldsymbol{\phi}^{-1}(\by)$. It is achieved by means of an autoencoder, where $\boldsymbol{\phi}$ acts as an encoder that takes $\bx$ to $\by$, and $\boldsymbol{\psi} := \boldsymbol{\phi}^{-1}$ behaves like a decoder that takes  $\by$ back to $\bx$. The quality of the autoencoder is evaluated by the loss as follows:
	\begin{equation}\label{eq:loss_autoencoder}
		\mathcal L_{\text{encdec}}= \left\| \bx-\boldsymbol{\psi}(\boldsymbol{\phi}(\bx))\right\|.
	\end{equation}
	\item \textbf{Linear dynamics.} To ensure linear dynamics of the embeddings $\by$, we require that $\by$ satisfies \eqref{eq:HamiltonianEquations}. Thus, there exists a Hamiltonian for $\by$, which takes a quadratic form, i.e., $\bH^{\left(\mathrm{L}\right)}(\by) = \by^\top \bA \by + \bb^\top \by + \bc$. As a result, we can write the dynamics of $\by$ as follows:
	\begin{equation}
		\ddt {\by}(t) = \bJ_{2m}\nabla_\by\bH^{\left(\mathrm{L}\right)}(\by),
	\end{equation}
	where $\bJ_{2m}$ is a symplectic matrix, and $\nabla_\by \bH^{\left(\mathrm{L}\right)}(\by)$ represents the gradient of the Hamiltonian with respect to $\by$.	This goal is assessed using the following objective function:
	\begin{equation}\label{eq:loss_deri}
		\mathcal L_{\text{deri}} = \left\|\nabla_\bx\boldsymbol{\phi}(\bx)\dot\bx - \bJ_{2m}\nabla_\by\bH^{\left(\mathrm{L}\right)}(\boldsymbol{\phi}(\bx))\right\|.
	\end{equation}	
	\item \textbf{Symplectic transformation.} Finally, we note that the mapping $\boldsymbol{\phi}$ that takes $\bx$ to $\by$ is required to be a symplectic transformation. Therefore, for $m \geq n$, we enforce it by using the following condition:
	\begin{equation}\label{eq:loss_symplectic}
		\mathcal L_{\text{symp}} = \left\| (
		\Diff \boldsymbol{\phi}_\bx)^\top \bJ_{2m} \Diff\boldsymbol{\phi}_\bx - \bJ_{2n}\right\|.
	\end{equation}
\end{itemize}
We consider $\|\cdot\|$ to be the mean-squared error. Using a weighted sum, we combine these three losses, given in \cref{eq:loss_autoencoder,eq:loss_deri,eq:loss_symplectic}, thus resulting in the total loss as follows:
\begin{equation}\label{eq:total_loss}
	\mathcal L = \lambda_1\mathcal L_{\text{encdec}} +\lambda_2 \mathcal L_{\text{symp}}  +\lambda_3\mathcal L_{\text{deri}} ,
\end{equation}
where $\lambda_{\{1,2,3\}}$ are hyper-parameters. Consequently, we can expect to learn the Koopman embeddings with symplectic properties from the given measurements $\bx$.

\paragraph{Stability guarantee.} Stability is a crucial property for differential equations. Thus, we next discuss the stability of the learned dynamical system for the identified embeddings $\by$. With this regard, we aim to study the stability of dynamical systems of the form:
\begin{equation}\label{eq:obs_can_sys}
	\dot{\by}(t) = \bJ_{2m}\nabla_\by\bH^{\left(\mathrm{L}\right)}(\by),
\end{equation}
where $\bH^{\left(\mathrm{L}\right)}(\by) = \by^\top \bA \by + \by^\top \bb + \bc$. For the stability of the system \eqref{eq:obs_can_sys}, it is sufficient to ensure that $\bH^{\left(\mathrm{L}\right)}(\by)$ is bounded from below and radially unbounded. This means that 
\begin{equation}\label{eq:hamiltonian_quad_properties}
 	\bH^{\left(\mathrm{L}\right)}(\by) \geq \gamma,\quad \text{and}\quad \lim_{\|\by\|\rightarrow \infty}\bH^{\left(\mathrm{L}\right)}(\by) \rightarrow \infty, \qquad \forall \by \in \R^{2m},
\end{equation}
where $\gamma \in \R$ is a constant and finite.  Since a constant in $\bH^{\left(\mathrm{L}\right)}$ does not affect the dynamics for $\by$, we can assume $\gamma$ to be zero without loss of generality. Next,  we discuss a construction of  Hamiltonian functions which are quadratic and satisfy \eqref{eq:hamiltonian_quad_properties}. 

\begin{theorem}\label{thm:stable_ham_linear}
	Consider a Hamiltonian $\bH^{\left(\mathrm{L}\right)}(\by) = \by^\top \bA \by + \by^\top \bb + \bc$ which can be written as 
	\begin{equation}
	\bH^{\left(\mathrm{L}\right)}(\by) = \begin{bmatrix}	\by \\ w \end{bmatrix}^\top \bQ \begin{bmatrix}	\by \\ w \end{bmatrix},
	\end{equation}
	where $w$ is a scalar constant, and  $\bQ = \bQ^\top > 0$. Then,
	\begin{enumerate}[label=(\alph*)]
		\item $\bH^{\left(\mathrm{L}\right)}(\by)$ is bounded from below, and
		\item $\bH^{\left(\mathrm{L}\right)}(\by)$ is radially unbounded, i.e., $\lim\limits_{\|\by\|\rightarrow \infty}\bH^{\left(\mathrm{L}\right)}(\by)~\rightarrow~\infty$.
		\item Furthermore, we have $\|\by(t)\|_2^2 ~\leq~ \dfrac{\bH^{\left(\mathrm{L}\right)}(\by_0)}{\sigma_{\min}\left(\bQ\right)},$~$\forall t \geq 0$, where $\by(t)$ is the solution of \eqref{eq:obs_can_sys} at time $t$ for a given initial condition $\by_0$, and $\sigma_{\min}(\cdot)$ denotes the smallest singular values. 
	\end{enumerate}
\end{theorem}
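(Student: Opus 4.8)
The plan is to route all three claims through one algebraic fact together with one structural fact. The algebraic fact is that, since $\bQ = \bQ^\top > 0$, its extreme singular values squeeze the associated quadratic form; because $\bQ$ is symmetric positive definite we have $\sigma_{\min}(\bQ) = \lambda_{\min}(\bQ) > 0$, and hence $\bl{z}^\top \bQ \bl{z} \geq \sigma_{\min}(\bQ)\,\|\bl{z}\|_2^2$ for every $\bl{z}$. The structural fact, needed only for (c), is that $\bH^{\left(\mathrm{L}\right)}$ is conserved along the flow of \eqref{eq:obs_can_sys}. To set things up I would introduce the augmented vector $\bl{z} := \begin{bmatrix} \by \\ w \end{bmatrix}$, so that $\bH^{\left(\mathrm{L}\right)}(\by) = \bl{z}^\top \bQ \bl{z}$ and $\|\bl{z}\|_2^2 = \|\by\|_2^2 + w^2$.

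Part (a) is then immediate: $\bH^{\left(\mathrm{L}\right)}(\by) = \bl{z}^\top \bQ \bl{z} \geq \sigma_{\min}(\bQ)\,\|\bl{z}\|_2^2 \geq 0$, so $\bH^{\left(\mathrm{L}\right)}$ is bounded below by zero. For part (b) I would retain the $w^2$ term, writing $\bH^{\left(\mathrm{L}\right)}(\by) \geq \sigma_{\min}(\bQ)\bigl(\|\by\|_2^2 + w^2\bigr)$; since $\sigma_{\min}(\bQ) > 0$ is a fixed constant, the right-hand side diverges as $\|\by\|_2 \to \infty$, which is radial unboundedness.

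The crux is part (c). First I would establish conservation by differentiating along a trajectory, $\ddt \bH^{\left(\mathrm{L}\right)}(\by(t)) = \bigl(\nabla_\by \bH^{\left(\mathrm{L}\right)}\bigr)^\top \dot\by = \bigl(\nabla_\by \bH^{\left(\mathrm{L}\right)}\bigr)^\top \bJ_{2m}\, \nabla_\by \bH^{\left(\mathrm{L}\right)} = 0$, the last equality holding because $v^\top \bJ_{2m} v = 0$ for the skew-symmetric $\bJ_{2m}$. Thus $\bH^{\left(\mathrm{L}\right)}(\by(t)) = \bH^{\left(\mathrm{L}\right)}(\by_0)$ for all $t \geq 0$. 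Combining this conservation law with the singular-value bound and discarding the nonnegative $w^2$ contribution gives $\sigma_{\min}(\bQ)\,\|\by(t)\|_2^2 \leq \bl{z}(t)^\top \bQ \bl{z}(t) = \bH^{\left(\mathrm{L}\right)}(\by(t)) = \bH^{\left(\mathrm{L}\right)}(\by_0)$, which rearranges to the stated estimate.

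The only genuinely delicate point is keeping the reduction in (c) airtight: the bound must come from conservation (a physical invariance) rather than any decay argument, and one must check that dropping $w^2$ weakens the lower bound on $\bH^{\left(\mathrm{L}\right)}$, which is precisely the direction needed to preserve a valid upper bound on $\|\by(t)\|_2^2$. Global existence of $\by(t)$ for $t \geq 0$ is not an obstacle, since a quadratic Hamiltonian yields an affine vector field whose solutions exist globally; it also follows a posteriori from the uniform bound just derived. The remaining inequalities are routine.
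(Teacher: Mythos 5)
Your proof is correct and takes essentially the same approach as the paper's: parts (a) and (b) follow from the positive-definiteness bound $\bl{z}^\top\bQ\,\bl{z} \geq \sigma_{\min}(\bQ)\,\|\bl{z}\|_2^2$ for the augmented vector $\bl{z}$, and part (c) combines conservation of $\bH^{\left(\mathrm{L}\right)}$ along trajectories of \eqref{eq:obs_can_sys} with that same bound before discarding the nonnegative $w^2$ term. The only difference is that you additionally verify the conservation law explicitly via skew-symmetry of $\bJ_{2m}$ and address global existence of solutions, two points the paper simply invokes as standard facts.
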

\begin{proof} 
	
	\begin{enumerate}[label=(\alph*)]
		\item Since $\bH^{\left(\mathrm{L}\right)}(\by)$ exhibits a sum-of-squares form, it is non-negative by construction. Hence, it is bounded from below, and precisely, $\bH^{\left(\mathrm{L}\right)}(\by) \geq 0$. 
		\item Using  the symmetric positivity property of $\bQ$, we have
		\begin{align*}
		\bH^{\left(\mathrm{L}\right)}(\by)  \geq \sigma_{\min}(\bQ)   \left\| \begin{bmatrix}	\by(t) \\ w \end{bmatrix} \right\|_2^2 \geq \sigma_{\min}(\bQ)  \|\by(t)\|_2^2. 
		\end{align*}
		Since $\sigma_{\min}(\bQ) > 0$, it is clear that as $\|\by(t)\|_2 \rightarrow \infty$, $\bH^{\left(\mathrm{L}\right)}(\by) \rightarrow \infty$. Hence, $\bH^{\left(\mathrm{L}\right)}(\by)$ is radially unbounded. 
		\item The dynamics of $\by$ is given by a canonical Hamiltonian system of the form \eqref{eq:obs_can_sys}. For this, we know that the Hamiltonian is constant along the trajectory, meaning 
		\[\bH^{\left(\mathrm{L}\right)}(\by_0 ) = \bH^{\left(\mathrm{L}\right)}(\by(t)), \quad \forall t\geq 0,\]
		where $\by_0$ is an initial condition and $\by(t)$ is the solution of \eqref{eq:obs_can_sys} at time $t$ for the given initial condition $\by_0$. Thus, we have
		\begin{align*}
		\bH^{\left(\mathrm{L}\right)}(\by_0) &= \bH^{\left(\mathrm{L}\right)}(\by(t)) \\
		& = \begin{bmatrix}	\by(t) \\ w \end{bmatrix}^\top \bQ \begin{bmatrix}	\by(t) \\ w \end{bmatrix} 
		\geq \sigma_{\min}(\bQ) \left\|\begin{bmatrix}	\by(t) \\ w \end{bmatrix}\right\|_2^2
		\geq \sigma_{\min}(\bQ) \left\|	\by(t) \right\|_2^2.
		\end{align*}
		Based on it, we immediately have the result (c), which concludes the proof.
	\end{enumerate}
\end{proof}
Utilizing the result of \Cref{thm:stable_ham_linear}, we can parameterize the Hamiltonian function for the observers $\by$ to guarantee the stability of their time-evaluation. 
\subsection{Systems with continuous spectra and remedy by cubicized representations}\label{subsec:cubic_stability}
Using the Koopman operator theory, one can strive to achieve a global linearization of nonlinear dynamical systems. However, in the case of certain nonlinear systems, such as the nonlinear pendulum, continuous spectra may be exhibited, making it difficult to learn a global linearization.  Capturing the dynamics of such systems requires a high-dimensional embedding space to account for the continuous spectra.
In a recent work \cite{lusch2018deep}, the eigenvalues of the linear Koopman operator are parameterized using a neural network that depends on the embeddings themselves, enabling richer dynamics to capture. However, it is then no longer a purely linear operator.

In this work, we instead take a different approach by exploring learning simpler weak nonlinear systems rather than  linearized Koopman embeddings. It builds on the theory, articulating that smooth nonlinear dynamical systems can be adequately represented as polynomial systems using a finite number of observers \cite{savageau1987recasting,morGu11}. This is in contrast to the Koopman operator theory, which may require infinite-dimensional embeddings to represent nonlinear systems as linear, particularly those with continuous spectra. Furthermore, nonlinear polynomial systems can be written as quadratic or cubic systems with a few more, but still finite, observers. 
Based on these observations, the authors in \cite{yildizetal23} investigated a problem of learning representations for nonlinear canonical Hamiltonian systems with cubic polynomial Hamiltonian. However, it can be noticed that nonlinear canonical Hamiltonian systems with cubic Hamiltonian cannot be stable. 
This instability arises from the nature of the cubic polynomial functions, which are neither bounded from below nor from above.
Consequently, even though the dynamics may exhibit the Hamiltonian conservation along the trajectories, the system can still be inherently unstable. An illustrative example is given in \Cref{exm:cubicHamiltonian}.

To address these drawbacks, namely with continuous spectra and Hamiltonian systems with cubic Hamiltonian, we seek to identify suitable embeddings from the measurements so that the dynamics of the embeddings can be given by a canonical Hamiltonian system of the form \eqref{eq:HamiltonianEquations}, where the Hamiltonian is a quartic polynomial function, instead of being cubic. Consequently, we expect to design finite-dimensional embeddings that can explain the dynamics of nonlinear Hamiltonian systems with continuous spectra while ensuring the dynamic stability of the embeddings. To achieve this, it is essential for the underlying Hamiltonian to be radially unbounded and non-negative. We delve into this topic in the following discussion.

\begin{theorem}\label{thm:stable_ham}
	Consider a canonical Hamiltonian system of the form \eqref{eq:HamiltonianEquations} as:
	\begin{equation}\label{eq:obs_can_quartic}
	\dot{\by}(t) = \bJ_{2m}\nabla_\by\bH^{\left(\mathrm{C}\right)}(\by),
	\end{equation}
	 where $\bH^{\left(\mathrm{C}\right)}(\by)$ is a quartic Hamiltonian as follows:
	$$\bH^{\left(\mathrm{C}\right)}(\by) = \ba_0 + \ba_1^\top\by + \ba_2^\top \left(\by\otimes \by\right)+ \ba_3^\top \left(\by\otimes \by\otimes \by\right) + \ba_4^\top \left(\by\otimes \by\otimes \by\otimes \by\right),$$ where $\{\ba_1,\ba_2,\ba_3,\ba_4\}$ are vectors of appropriate sizes and $\otimes$ denotes the Kronecker product. If $\bH^{\left(\mathrm{C}\right)}(\by)$ can be written as a sum-of-square, i.e., 
	\begin{equation}\label{eq:hamiltonain_quartic}
	\bH^{\left(\mathrm{C}\right)}(\by) = \begin{bmatrix} \by \\ \by \otimes \by \\ w \end{bmatrix}^\top \bQ \begin{bmatrix} \by \\ \by \otimes \by \\ w \end{bmatrix},
	\end{equation}
	where $w$ is a scalar constant and $\bQ = \bQ^\top > 0$. Then, the following hold:
	\begin{enumerate}
		\item $\bH^{\left(\mathrm{C}\right)}(\by)$ is bounded from below, 
		\item $\bH^{\left(\mathrm{C}\right)}(\by)$ is radially unbounded, i.e., $\lim\limits_{\|\by\|\rightarrow \infty}\bH^{\left(\mathrm{C}\right)}(\by)~\rightarrow~\infty$, and
		\item $\|\by(t)\|^2_2 + \|\by(t)\|^4_2~\leq ~\dfrac{\bH^{\left(\mathrm{C}\right)}(\by_0)}{\sigma_{\min}\left(\bQ\right)}$, where $\by(t)$ is the solution of \eqref{eq:obs_can_quartic} at time $t$ for a given initial condition $\by_0$.
\end{enumerate}
\end{theorem}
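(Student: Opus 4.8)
The plan is to follow the same three-step template as in the proof of \Cref{thm:stable_ham_linear}, since the two statements have identical structure once the Kronecker block is accounted for. I would write $\mathbf{z} := \begin{bmatrix} \by \\ \by \otimes \by \\ w \end{bmatrix}$, so that $\bH^{\left(\mathrm{C}\right)}(\by) = \mathbf{z}^\top \bQ \mathbf{z}$ with $\bQ = \bQ^\top > 0$. The one genuinely new ingredient compared with the quadratic case is the multiplicative behaviour of the Euclidean norm under the Kronecker product, which turns the middle block of $\mathbf{z}$ into a quartic term in $\|\by\|_2$.

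For part (1), I would note that the representation $\bH^{\left(\mathrm{C}\right)}(\by) = \mathbf{z}^\top \bQ \mathbf{z}$ with $\bQ > 0$ exhibits the Hamiltonian as a sum of squares, so $\bH^{\left(\mathrm{C}\right)}(\by) \geq 0$ for every $\by \in \R^{2m}$, which gives the lower bound. For part (2), I would use positive-definiteness to bound $\bH^{\left(\mathrm{C}\right)}(\by) \geq \sigma_{\min}(\bQ)\,\|\mathbf{z}\|_2^2$, and then expand $\|\mathbf{z}\|_2^2 = \|\by\|_2^2 + \|\by \otimes \by\|_2^2 + w^2$. Invoking the identity $\|\by \otimes \by\|_2 = \|\by\|_2^2$ yields $\|\mathbf{z}\|_2^2 = \|\by\|_2^2 + \|\by\|_2^4 + w^2$; discarding the nonnegative terms $\|\by\|_2^4 + w^2$ gives $\bH^{\left(\mathrm{C}\right)}(\by) \geq \sigma_{\min}(\bQ)\,\|\by\|_2^2$, and since $\sigma_{\min}(\bQ) > 0$ this forces $\bH^{\left(\mathrm{C}\right)}(\by) \to \infty$ as $\|\by\| \to \infty$, establishing radial unboundedness.

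For part (3), I would exploit the fact that \eqref{eq:obs_can_quartic} is a canonical Hamiltonian system, so $\bH^{\left(\mathrm{C}\right)}$ is conserved along trajectories: $\bH^{\left(\mathrm{C}\right)}(\by_0) = \bH^{\left(\mathrm{C}\right)}(\by(t))$ for all $t \geq 0$. Substituting the evaluated lower bound from part (2) at $\by(t)$ gives $\bH^{\left(\mathrm{C}\right)}(\by_0) \geq \sigma_{\min}(\bQ)\bigl(\|\by(t)\|_2^2 + \|\by(t)\|_2^4 + w^2\bigr) \geq \sigma_{\min}(\bQ)\bigl(\|\by(t)\|_2^2 + \|\by(t)\|_2^4\bigr)$, and dividing through by $\sigma_{\min}(\bQ) > 0$ produces the stated bound.

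The step I would be most careful about --- though it is not really an obstacle so much as the one place the argument departs from the quadratic template --- is the Kronecker norm identity $\|\by \otimes \by\|_2 = \|\by\|_2^2$, which is exactly what upgrades the quadratic decay estimate of \Cref{thm:stable_ham_linear} into the combined quadratic-plus-quartic bound appearing here. Everything else is a direct transcription of the argument for the linear (quadratic-Hamiltonian) case.
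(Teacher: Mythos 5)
Your proposal is correct and follows essentially the same route as the paper's proof: conservation of $\bH^{\left(\mathrm{C}\right)}$ along trajectories, the bound $\mathbf{z}^\top \bQ \mathbf{z} \geq \sigma_{\min}(\bQ)\|\mathbf{z}\|_2^2$, and the Kronecker identity $\|\by\otimes\by\|_2^2 = \|\by\|_2^4$, which is exactly the relation the paper invokes. The only cosmetic difference is that the paper dispatches parts (1) and (2) by referring back to the argument of \Cref{thm:stable_ham_linear}, whereas you write them out explicitly.
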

\begin{proof}
	The proofs of (a) and  (b) follow the same arguments given in \Cref{thm:stable_ham_linear}; thus, we will omit them here. 
	For (c), we note that 
	\[\bH^{\left(\mathrm{C}\right)}(\by_0 ) = \bH^{\left(\mathrm{C}\right)}(\by(t)), \quad \forall t \geq 0,\]
	where $\by(t_0)$ is an initial condition and $\by(t)$ is the solution of \eqref{eq:obs_can_quartic} at time $t$ for the given initial condition $\by_0$. Thus, we have
	\begin{align*}
	\bH^{\left(\mathrm{C}\right)}(\by_0) &= \bH^{\left(\mathrm{C}\right)}(\by(t) )\\
	& = \begin{bmatrix}	\by(t) \\ \by(t) \otimes \by(t) \\ w \end{bmatrix}^\top \bQ \begin{bmatrix}	\by(t) \\ \by(t) \otimes \by(t) \\ w \end{bmatrix} 
	\geq \sigma_{\min}(\bQ) \left\|\begin{bmatrix}	\by(t) \\ \by(t) \otimes \by(t) \\ w \end{bmatrix}\right\|^2_2 
	\geq \sigma_{\min}(\bQ) \left(\left\|	\by(t) \right\|_2^2 +  \left\|	\by(t) \right\|_2^4\right).
	\end{align*}
	In the above equation, we have used the relation $ \|\by\otimes \by\|_2^2 = \|\by\|^4_2$. This completes the proof.
\end{proof}
The positive definite condition on the matrix $\bQ$  is sufficient for stability; in the following, we discuss a case where $\bQ$ can be semi-definite, yet the Hamiltonian system is stable. 
\begin{lemma}\label{lemma:Q_DSP}
	Consider a canonical Hamiltonian system with a Hamiltonian 
	\begin{equation}\label{eq:hamiltonain_quartic_spd}
		\bH^{\left(\mathrm{C}\right)}(\by) = \begin{bmatrix} \by \\ \by \otimes \by \\w \end{bmatrix}^\top \bQ \begin{bmatrix} \by \\ \by \otimes \by \\w \end{bmatrix},
	\end{equation}
	where $\by\in \R^{2m}$ and $\bQ = \bQ^\top \geq 0$. Let $\bV \in \R^{\hat{m},\tm}$, where $\tilde m = 2m + 4m^2 + 1$ and $\hat{m} < \tilde m$, be a matrix so that 
	\begin{equation}\label{eq:decompose_Q}
		\bQ = \bV^\top\bQ_1 \bV,
	\end{equation}
	where $\bQ_1 = \bQ_1^\top > 0 $.	If $\left\|\bV\begin{bmatrix} \by \\ \by \otimes \by \\ w \end{bmatrix}\right\|_2^2 \geq \bg(\by)$, where the function $\bg(\by)$ is radially unbounded, then the system is stable. 
\end{lemma}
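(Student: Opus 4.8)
The plan is to mimic the Lyapunov-type argument of \Cref{thm:stable_ham}, but to route the lower bound through the strictly positive definite factor $\bQ_1$ rather than through $\bQ$ itself. This detour is necessary because $\sigma_{\min}(\bQ) = 0$ when $\bQ$ is only semi-definite, so the estimate used in \Cref{thm:stable_ham} degenerates and yields no information. Writing $\boldsymbol{\xi} := \begin{bmatrix} \by \\ \by \otimes \by \\ w \end{bmatrix}$ for the extended state, I would first substitute the decomposition \eqref{eq:decompose_Q} into \eqref{eq:hamiltonain_quartic_spd} to obtain
\[
\bH^{\left(\mathrm{C}\right)}(\by) = \boldsymbol{\xi}^\top \bV^\top \bQ_1 \bV \boldsymbol{\xi} = (\bV\boldsymbol{\xi})^\top \bQ_1 (\bV\boldsymbol{\xi}) \geq \sigma_{\min}(\bQ_1)\,\|\bV\boldsymbol{\xi}\|_2^2,
\]
where the inequality uses $\bQ_1 = \bQ_1^\top > 0$. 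This is the step that takes over the role played by $\bQ$ in the positive-definite setting.

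Next I would invoke the standing hypothesis $\|\bV\boldsymbol{\xi}\|_2^2 \geq \bg(\by)$ to chain the estimate into $\bH^{\left(\mathrm{C}\right)}(\by) \geq \sigma_{\min}(\bQ_1)\,\bg(\by)$. Then, exactly as in \Cref{thm:stable_ham}, I would use conservation of the Hamiltonian along trajectories of \eqref{eq:obs_can_quartic}, namely $\bH^{\left(\mathrm{C}\right)}(\by(t)) = \bH^{\left(\mathrm{C}\right)}(\by_0)$ for all $t \geq 0$, to deduce the uniform-in-time bound
\[
\bg(\by(t)) \leq \frac{\bH^{\left(\mathrm{C}\right)}(\by(t))}{\sigma_{\min}(\bQ_1)} = \frac{\bH^{\left(\mathrm{C}\right)}(\by_0)}{\sigma_{\min}(\bQ_1)}, \qquad \forall t \geq 0,
\]
where division is legitimate since $\sigma_{\min}(\bQ_1) > 0$. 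Finally, because $\bg$ is radially unbounded its sublevel sets are bounded, so the constant right-hand side confines $\by(t)$ to a fixed bounded sublevel set of $\bg$ for all time, which is precisely the bounded stability claimed.

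The main obstacle is conceptual rather than computational: because $\hat m < \tilde m$, the matrix $\bV$ is a genuine compression, so a bound on $\|\bV\boldsymbol{\xi}\|_2$ need not by itself control $\|\by\|$ — this is exactly why a naive estimate collapses once $\bQ$ is merely semi-definite. The hypothesis $\|\bV\boldsymbol{\xi}\|_2^2 \geq \bg(\by)$ with $\bg$ radially unbounded is the ingredient that bridges this gap, transferring a bound on the compressed coordinates back to a bound on $\by$; I would state clearly that this hypothesis is what makes the sublevel sets of $\bg$ the right object and that, absent it, semi-definiteness of $\bQ$ is insufficient. I would also record, as in the positive-definite case, that $\bH^{\left(\mathrm{C}\right)}(\by) \geq 0$ from $\bQ \geq 0$, so the right-hand side above is a finite non-negative constant and the argument is non-vacuous.
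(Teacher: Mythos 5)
Your proposal is correct and follows essentially the same route as the paper: substitute $\bQ = \bV^\top \bQ_1 \bV$, bound $\bH^{\left(\mathrm{C}\right)}(\by) \geq \sigma_{\min}(\bQ_1)\,\bg(\by)$ using the positive definiteness of $\bQ_1$ together with the hypothesis on $\bg$, and conclude stability from non-negativity (sum-of-squares form) plus radial unboundedness of the Hamiltonian. The only difference is presentational: the paper stops after verifying these two Lyapunov conditions, invoking the sufficiency criterion stated in \Cref{subsec:linear_stability}, whereas you additionally unpack that criterion into the explicit trajectory bound $\bg(\by(t)) \leq \bH^{\left(\mathrm{C}\right)}(\by_0)/\sigma_{\min}(\bQ_1)$ via conservation of the Hamiltonian, mirroring part (c) of \Cref{thm:stable_ham}.
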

\begin{proof}
	First, note that for stability, we require the underlying Hamiltonian to be bounded from below and radially unbounded. Since $\bH^{\left(\mathrm{C}\right)}$ has a sum-of-squares form, it is bounded from below by construction. For radially unboundedness, we substitute for $\bQ$ from \eqref{eq:decompose_Q} in \eqref{eq:hamiltonain_quartic_spd}, yielding
	\begin{equation}
		\bH^{\left(\mathrm{C}\right)}(\by) = \begin{bmatrix} \by \\ \by \otimes \by \\w \end{bmatrix}^\top \bV^\top \bQ_1 \bV \begin{bmatrix} \by \\ \by \otimes \by \\w \end{bmatrix} \geq 
		\sigma_{\min}(\bQ_1) \left\|\bV \begin{bmatrix} \by \\ \by \otimes \by \\w \end{bmatrix} \right\|_2^2 = \sigma_{\min}(\bQ_1)\bg(\by).
	\end{equation} 
	Since the function $\bg$ is assumed to be radially unbounded, the Hamiltonian is also radially unbounded, thus concluding the proof.
\end{proof}
An illustrative example (see \Cref{exa:spd_Q}) demonstrates  \Cref{lemma:Q_DSP}. Thus, we focus on the quartic hypothesis applied to the Hamiltonian, which can lead to the desired embedding- stable global cubicized embeddings for nonlinear canonical Hamiltonian dynamical systems. To achieve this, we follow similar principles discussed in the previous subsection, where we discussed learning linearized Koopman embeddings for nonlinear canonical Hamiltonian systems. However, the crucial distinction lies in the hypothesis about the Hamiltonian function, which now takes the form of a quartic function (as given in  \eqref{eq:hamiltonain_quartic}), rather than a quadratic one. 
\begin{remark}
	In \Cref{thm:stable_ham} and \Cref{lemma:Q_DSP}, we utilized a general quartic function in the sum-of-squares form. However, we can relax it by using $\by\circ\by$ instead of $\by\otimes\by$, where $\circ$ denotes the Hadamard product. This modification allows us to reduce the number of parameters defining $\bH^{\left(\mathrm{C}\right)}$ while still retaining a quartic function in a sum-of-squares form. Although this relaxation results in a slightly less general quartic function, we empirically observe in our numerical experiments that it is less prone to over-fitting; hence, we utilize this in our experiments.
\end{remark}

\section{Results for Low-dimensional Nonlinear Systems}\label{sec:num}	
In this section, we begin by examining the performance of the proposed methodologies for low-dimensional dynamical systems. We refer to the methods presented in \Cref{subsec:linear_stability,subsec:cubic_stability} as \linearembs\ and \cubicembs, respectively, where `\texttt{s}' at the beginning indicates the stability guarantee. 
We consider three examples to evaluate their performance: the simple nonlinear pendulum, a harmonic oscillator, and the Lotka-Volterra equation. 
We compare our proposed methodologies with the one discussed in \cite{yildizetal23}, which is also purely data-driven and aims to learn a quadratic representation for canonical Hamiltonian nonlinear systems. We refer to it as  \texttt{quad-embs}.
Furthermore,  It is worth noting that all three methods considered involve auto-encoders based on neural networks. Therefore, we employ the same neural network architectures for all methods. Additionally, the training settings remain consistent across all three methods, which are discussed in detail in \Cref{appendix:training}.
Lastly, we utilize the implicit midpoint rule as the time integrator that preserves the symplectic structure after time discretization; see \cite{hairer2006structure}.
%

\subsection{Nonlinear pendulum} 
In the first example, we consider an ideal nonlinear pendulum in a normalized form, whose governing equations are given by 
\begin{equation}\label{eq:nonlinearpendulum}
\begin{bmatrix}
\dot{p}(t) \\ \dot{q}(t)
\end{bmatrix} = \begin{bmatrix}
-\sin(q(t))\\ p(t)
\end{bmatrix},
\end{equation}
where $q$ and $p$ denote the position and momentum related to the pendulum dynamics. Moreover, the Hamiltonian of \eqref{eq:nonlinearpendulum} is given by
\begin{equation}
\Hamiltonian(q,p) = (1-\cos (q)) + \dfrac{1}{2}p^2,
\end{equation}

To generate the training dataset, we follow \cite{yildizetal23}. The initial values of $q$ and $p$ are chosen within the range of $[-3, 3] \times [-3, 3]$; however, to avoid a $360$-degree swing of the pendulum, we choose them such that the energy $\Hamiltonian(q,p) \leq  2$. We consider $20$ random initial conditions, and for each initial condition, we take $25$ equidistant data points within the time interval $[0,20]$. 

\begin{figure}[tb]
	\includegraphics[width=0.9\linewidth]{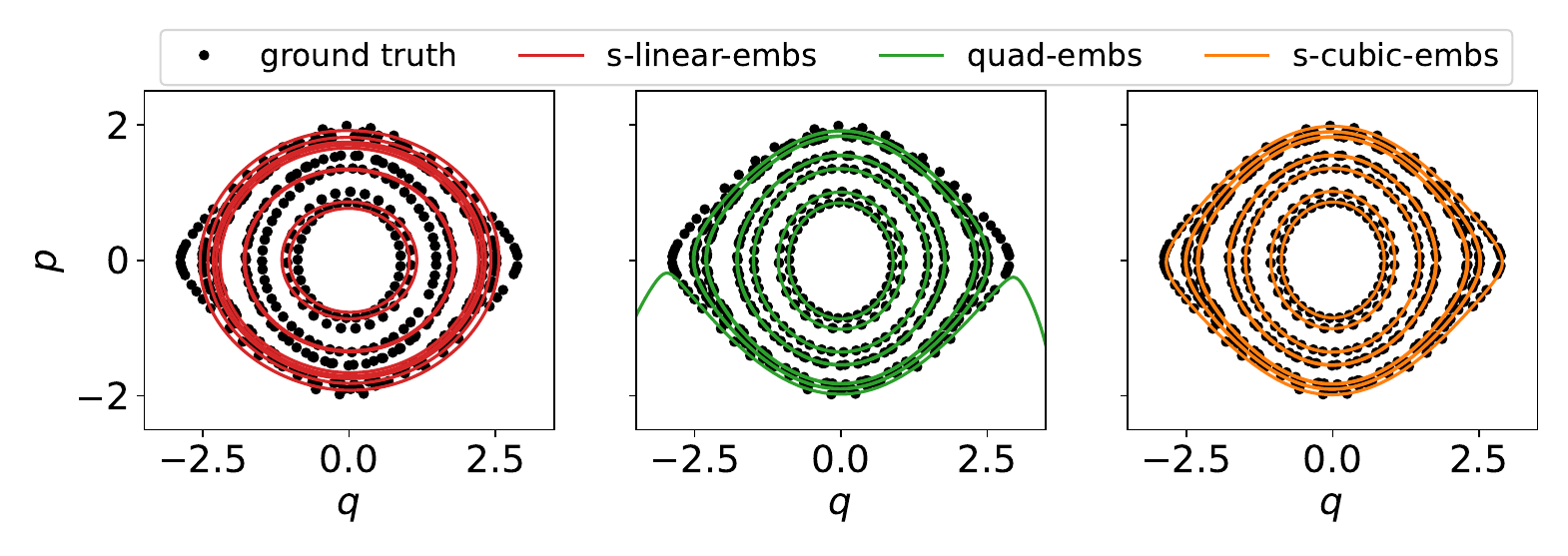}
	\caption{Nonlinear pendulum: The figure shows a comparison of various learned models with the ground truth in the phase space for test initial conditions.}
	\label{fig:pend-phase}
\end{figure}

Next, we learn the desired embeddings and their dynamics using \linearembs, \quadembs, and \cubicembs. To accomplish this, we fix the dimension of the embeddings to two and identify them by means of an auto-encoder for each of the three methods. The autoencoder architecture and training setup details can be found in \Cref{appendix:training}.
After learning the desired embeddings, we evaluate the performance of the learned representation using $25$  test initial conditions that were not part of the training set. In addition, for the testing phase, we consider a much longer time interval of $[0,50]$, compared to the training time interval $\left([0,20]\right)$,  and measure the performance using $2500$ points within the testing interval. 

First, we illustrate the quality of the learned $q$ and $p$ using the embeddings in phase space, shown in \Cref{fig:pend-phase}. These figures clearly demonstrate that \linearembs\ fails to capture the dynamics accurately. This is potentially attributed to the fact that the nonlinear pendulum has continuous spectra, making it difficult to learn a finite-dimensional universal linear representation. Furthermore, \quadembs\ learns the dynamics well for small amplitudes $q$ and $p$.
However, it struggles to capture the dynamics for larger amplitudes, and its limitations become even more apparent when dealing with highly nonlinear dynamics, where it fails completely despite preserving the Hamiltonian for the embeddings.
On the other hand, the proposed methodology, namely \cubicembs, accurately produces the dynamics for all test initial conditions. Notably, all these dynamics are stable by construction, making \cubicembs\ a reliable and robust approach even for nonlinear systems with continuous spectra. 
\begin{figure}[tb]
	\includegraphics[width=0.350\linewidth]{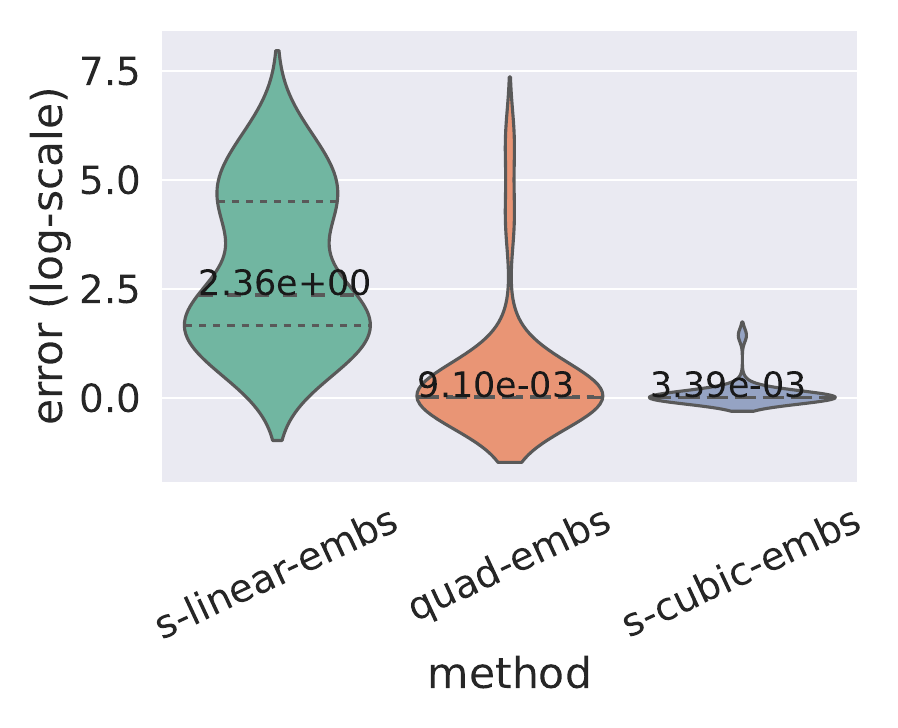}
	\caption{Nonlinear pendulum:  A qualitative performance analysis of different methods for test initial conditions. The middle dashed line shows the median, with the precise number, across all test initial conditions.} 
	\label{fig:pendulum_voilin}
\end{figure}

For further qualitative analysis, we employ the following error measure:
\begin{equation}\label{eq:error_measure}
\cE(\bx_0) = \left.\dfrac{1}{\cN \cdot m}\sum_{i=1}^\cN \|\bx^{\texttt{gt}}(t_i;\bx_0) - \bx^{\texttt{pred}}(t_i;\bx_0) \|_2^2\right.,
\end{equation}
where $\bx^{\texttt{gt}}(t_i;\bx_0)\in \R^m$ and $\bx^{\texttt{pred}}(t_i;\bx_0)\in \R^m$ are, respectively,  the ground truth and predicted solutions at time $t_i$ for a given initial condition $\bx_0$. Using the measure \eqref{eq:error_measure}, we calculate $\cE$ for $25$ random test initial conditions and present a violin plot in \Cref{fig:pendulum_voilin}. The plot clearly indicates the superior performance of \cubicembs\ compared to the other two considered methods. In \Cref{fig:pendulum_timesimulation}, we additionally plot time-domain simulations for two specific cases from the $25$ test conditions for which \cubicembs\ performs the best and worst with respect to the ground truth. These figures show that for the best case, \quadembs\ also does well to capture the dynamics, but \linearembs\ fails to do so. Conversely, for the worst case, only \cubicembs\ could accurately capture the dynamics,  although a slight shift in behavior is observed.

\begin{figure}[tb]
	\centering
	\begin{subfigure}[b]{1\textwidth}
		\includegraphics[width=0.24\linewidth]{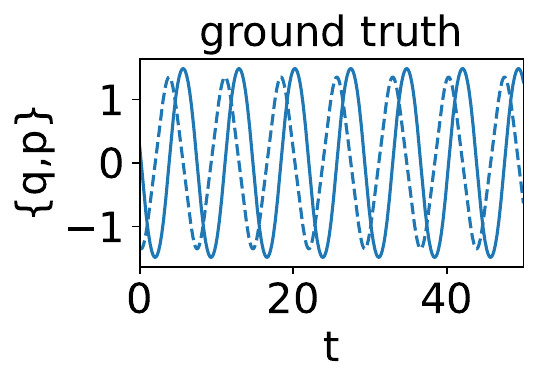}				\includegraphics[width=0.24\linewidth]{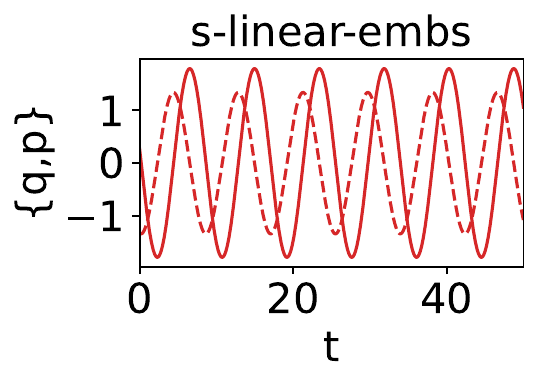}	
		\includegraphics[width=0.24\linewidth]{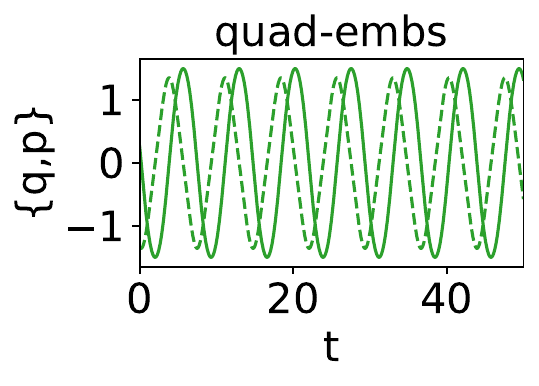}
		\includegraphics[width=0.24\linewidth]{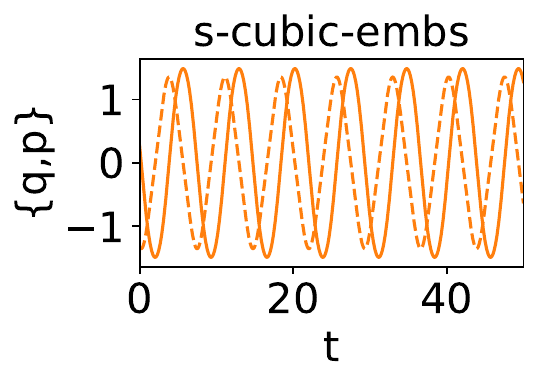}
		\caption{The best scenario for \cubicembs.}
	\end{subfigure}
	\begin{subfigure}[b]{1\textwidth}
		\includegraphics[width=0.24\linewidth]{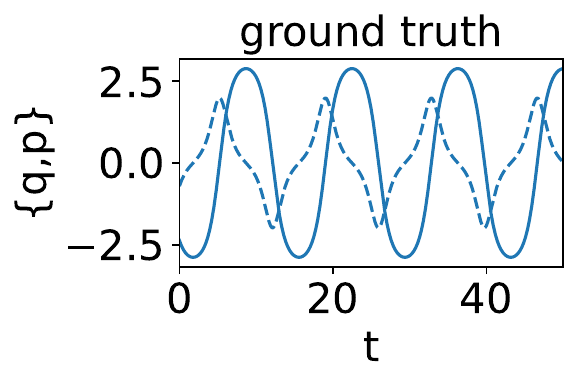}				\includegraphics[width=0.24\linewidth]{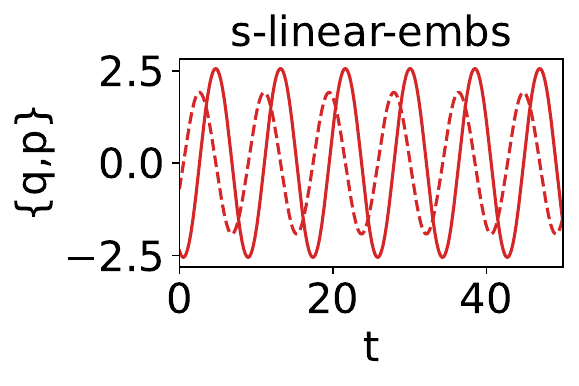}	
		\includegraphics[width=0.24\linewidth]{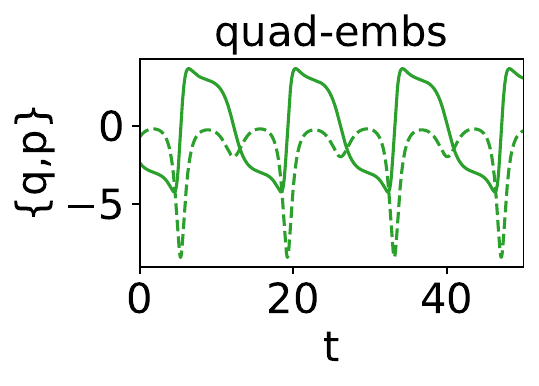}
		\includegraphics[width=0.24\linewidth]{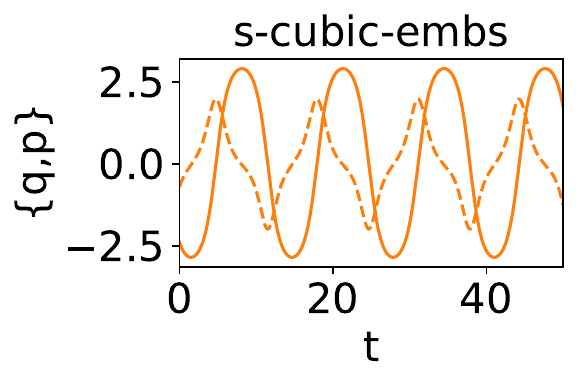}
		\caption{The worst scenario for \cubicembs.}
	\end{subfigure}
	\caption{Nonlinear pendulum: Comparisons of the time-domain simulations using learned different representations test conditions for which \cubicembs\ performs the best and worst among the considered $25$ test conditions.}
	\label{fig:pendulum_timesimulation}
\end{figure}

\subsection{Nonlinear oscillator} 
The second example involves a harmonic nonlinear oscillator, described by the Hamiltonian as follows:
\begin{equation}
	\Hamiltonian(q,p) = \frac{p^2}{2}+\frac{q^2}{2}+\frac{q^2}{4},
\end{equation}
where $q$ and $p$ denote the position and momentum of the oscillator, respectively. To gather training data, we select $50$ data points in the time interval $[0,4]$ for a given initial condition and consider  $20$  random initial conditions. We consider  $\{q,p\} \in [-2,2]\times [-2,2]$. Similar to the previous example, we constrain the initial conditions so that $\cH(q,p) \leq 1$.

Next, we learn suitable embeddings using \linearembs, \quadembs, and \cubicembs\ by setting the dimension of it to two. Once we have the desired embeddings, we evaluate their performance using $25$ test initial conditions. Moreover, for testing, we consider a significantly longer time than the training time, which is $[0,50]$, and take $5000$ points in the testing time interval.
First, in \Cref{fig:nlo_phase}, we present the phase space plots using the learned $\{q,p\}$ using different methods. We observe that \linearembs\ could not accurately capture the phase space, while \quadembs\ and \cubicembs\ capture it well, at least in the eyeball norm. 

\begin{figure}[tb]
	\includegraphics[width=0.90\linewidth]{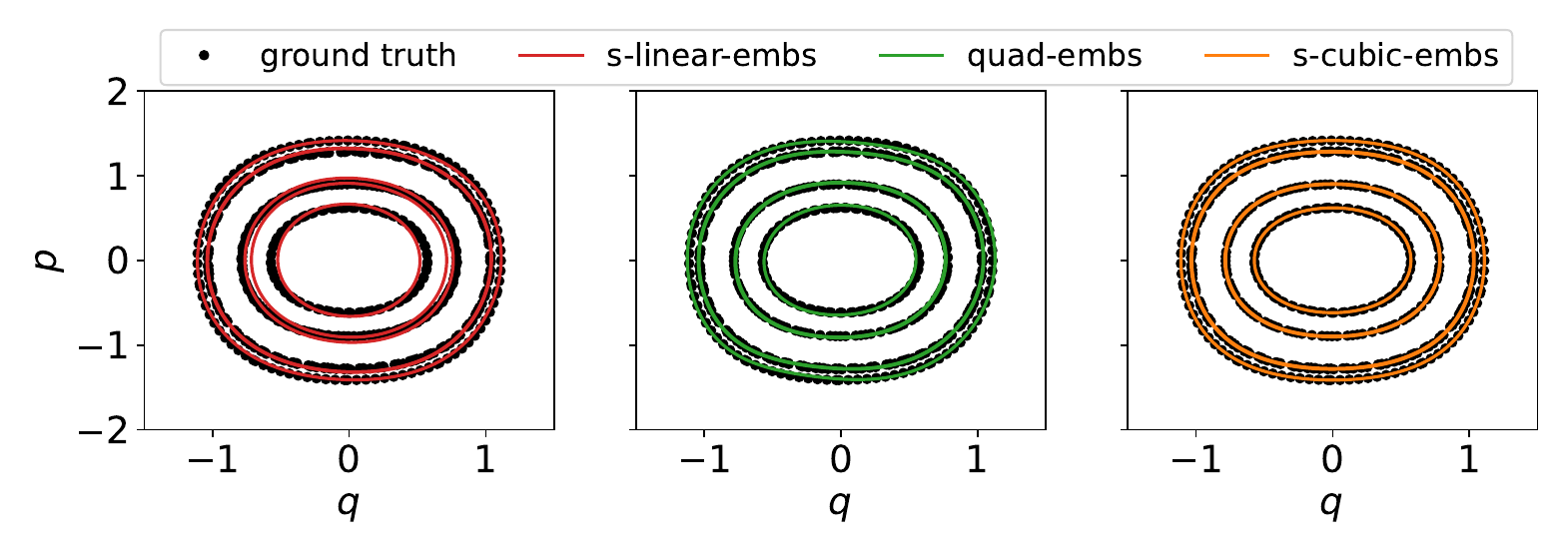}
	\caption{Nonlinear oscillator: The figure shows a comparison of various learned models with the ground truth in the phase space for test initial conditions.}
	\label{fig:nlo_phase}
\end{figure}

To conduct a qualitative comparison, we utilize the measure given in \eqref{eq:error_measure}.  Based on this, we calculate the error for each test initial condition and show them in a violin plot in \Cref{fig:nlo_errorplot}. It shows the superior performance of \cubicembs\ as compared to the other two.  It is worth noting that  \quadembs\ and \cubicembs\ yield solutions that appear similar in phase space; however, \quadembs\ exhibits a significant shift in the solutions compared to \cubicembs, leading to a higher error in the measure \eqref{eq:error_measure} for \quadembs.

\begin{figure}[tb]
	\includegraphics[width=0.350\linewidth]{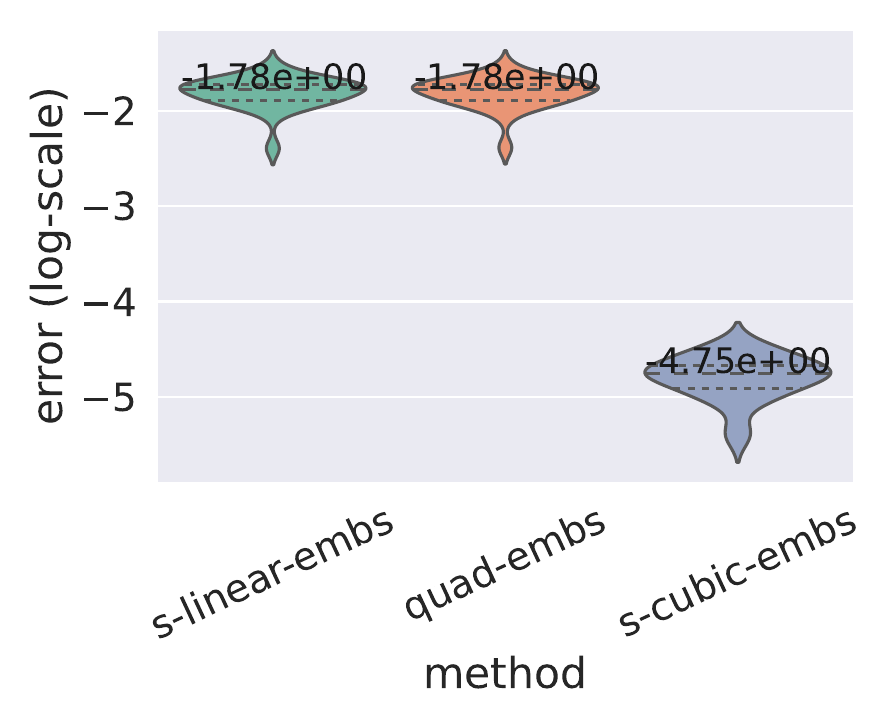}
	\caption{Nonlinear oscillator: A qualitative performance analysis of different methods for test initial conditions. The middle dashed line shows the median, with the precise number, across all test initial conditions.} 
	\label{fig:nlo_errorplot}
\end{figure}

Furthermore, in \Cref{fig:nlo_time}, we present the time-domain simulations of two test initial conditions, where \cubicembs\ performs the worst and best among the considered test conditions as compared to the ground truth. We observe that \linearembs\ and \quadembs\ yield solutions that appear similar to the ground truth but have a noticeable shift in the solution as compared to the ground truth. Conversely, \cubicembs\ outperforms both \linearembs\ and \quadembs\ in terms of the amplitude and period, as can be clearly seen in the solutions at around time $t=50$.

\begin{figure}[tb]
	\centering
	\begin{subfigure}[b]{1\textwidth}
		\includegraphics[width=0.24\linewidth]{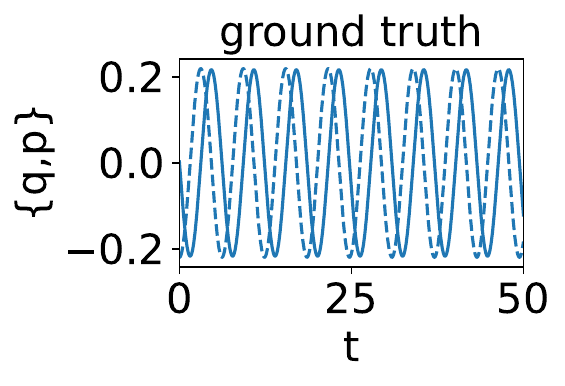}			\includegraphics[width=0.24\linewidth]{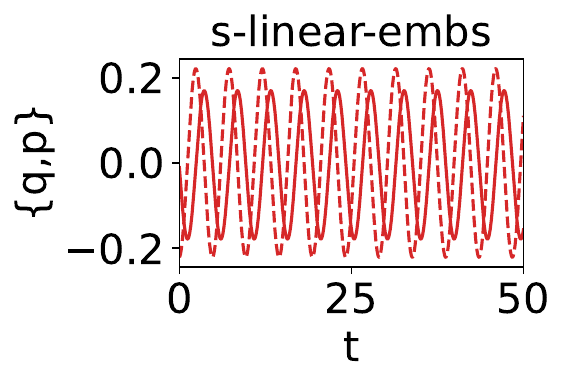}	
		\includegraphics[width=0.24\linewidth]{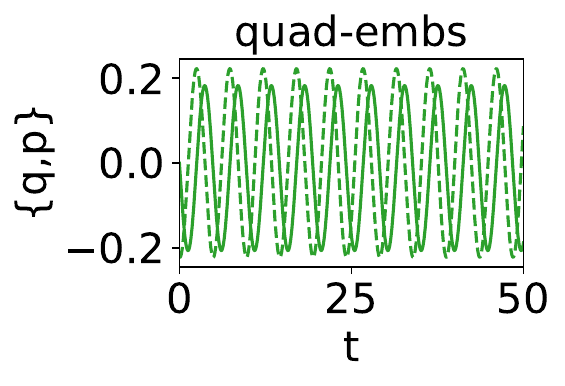}
		\includegraphics[width=0.24\linewidth]{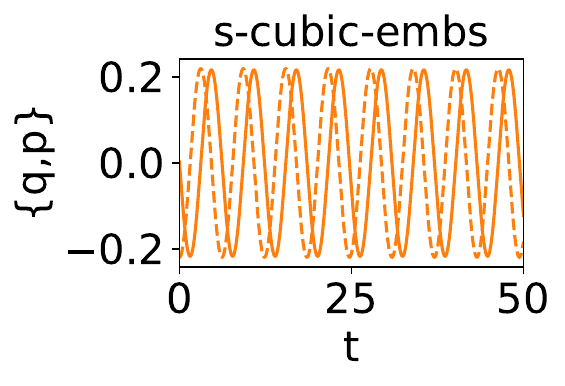}
		\caption{The best scenario for \cubicembs.}
	\end{subfigure}
	\begin{subfigure}[b]{1\textwidth}
		\includegraphics[width=0.24\linewidth]{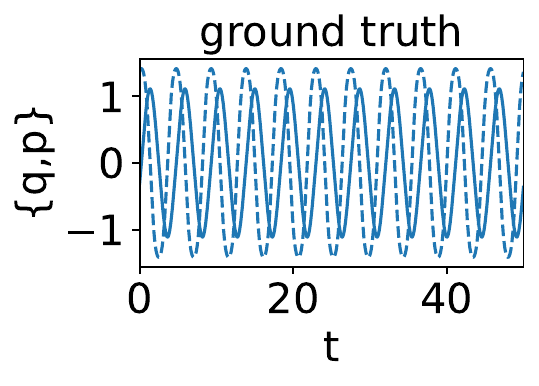}			\includegraphics[width=0.24\linewidth]{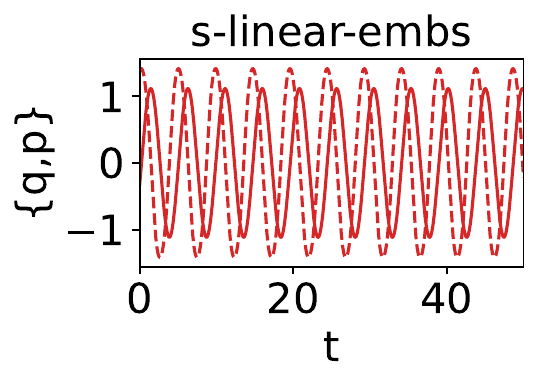}	
		\includegraphics[width=0.24\linewidth]{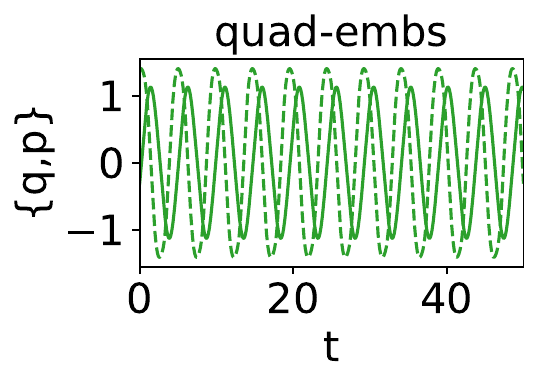}
		\includegraphics[width=0.24\linewidth]{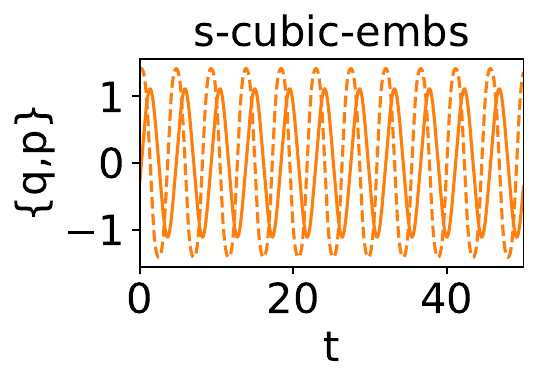}
		\caption{The worst scenario for \cubicembs.}
	\end{subfigure}
	\caption{Nonlinear oscillator: Comparisons of the time-domain simulations using learned different representations test conditions for which \cubicembs\ performs the best and worst among the considered $25$ test conditions with respect to the ground truth.}
	\label{fig:nlo_time}
\end{figure}
\subsection{Lotka–Volterra model} 
In the last low-dimensional example, we consider the Lotka-Volterra model, which is a well-known mathematical model used to describe predator-prey populations' dynamics. It is a Hamiltonian system with the Hamiltonian as follows:
\begin{equation*}
	\Hamiltonian(q,p) = p-e^{p}+2q-e^q.
\end{equation*}

\begin{figure}[tb]
	\includegraphics[width=0.9\linewidth]{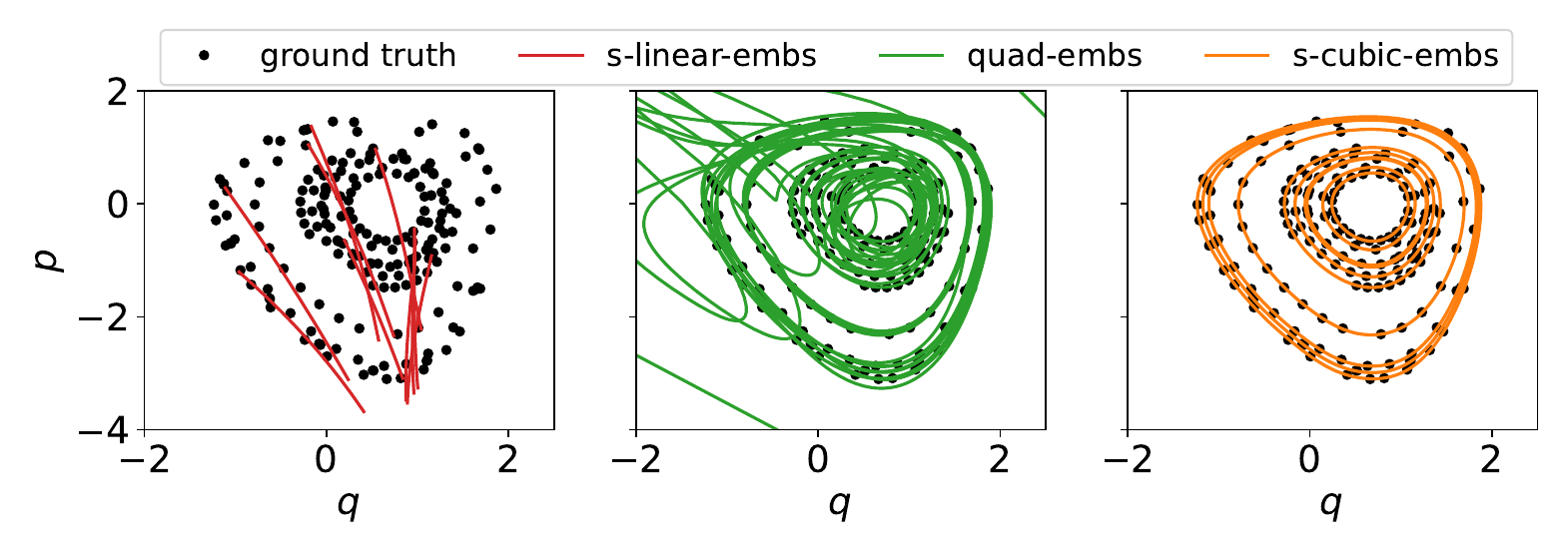}
	\caption{Lotka–Volterra model: The figure shows a comparison of various learned models with the ground truth in the phase space for test initial conditions.}
	\label{fig:lv_phaseplot}
\end{figure}

To learn the dynamics, we begin by collecting training data using $20$ random initial conditions in the range of $[-1.5,1.5]\times [-1.5,1.5]$ for $q$ and $p$. From the set of initial conditions, we choose the initial conditions for which $\cH(q,p) \leq 4$. For each initial condition, we consider $100$ points within the time interval of $[0,10]$. With the collected data, we proceed to learn the desired embeddings in a four-dimensional space using all three methods.

To assess the quality of these methods, we conducted tests using $25$ different initial conditions. For the testing phase, we consider the time interval $[0,50]$ and take $10000$ points within the testing time interval. Note that the testing time interval is five times larger than the training time interval. Like previous examples, we first plot the phase space in \Cref{fig:lv_phaseplot},  revealing that \linearembs\ does not capture the dynamics at all. This limitation could be attributed to the continuous spectra of the Lotka-Volterra model, making it challenging to derive a universal linearized model using finite-dimensional embeddings.
Moving forward, we evaluated \quadembs, which partially captured the dynamics, but it showed signs of instability when integrated over a longer time period. In contrast, \cubicembs\ demonstrated remarkable accuracy in learning the dynamics.

Additionally, in \Cref{fig:lv_time}, we plot the time-domain simulations using the test initial conditions for which \cubicembs\ performs the best and the worst with respect to the ground truth. The figures clearly demonstrate that \cubicembs\ is the only method among the three considered methods capable of accurately reproducing the dynamics for all test cases.

\begin{figure}[tb]
	\centering
	\begin{subfigure}[b]{1\textwidth}
		\includegraphics[width=0.24\linewidth]{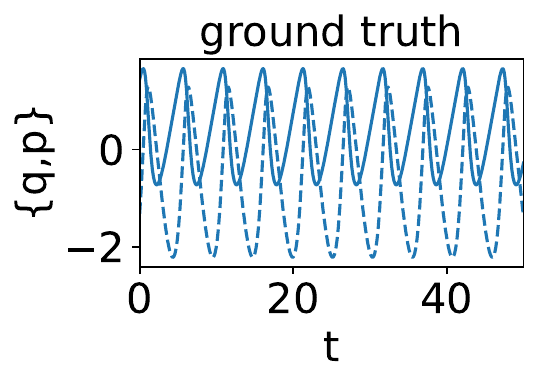}				\includegraphics[width=0.24\linewidth]{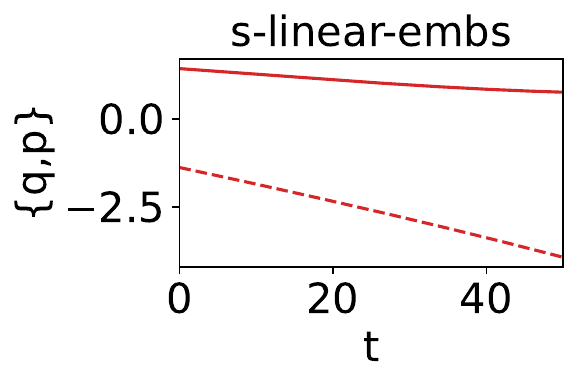}	
		\includegraphics[width=0.24\linewidth]{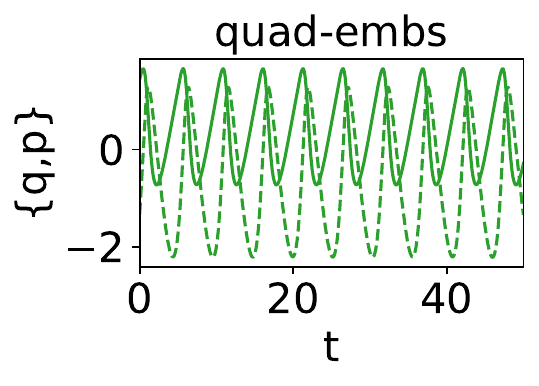}
		\includegraphics[width=0.24\linewidth]{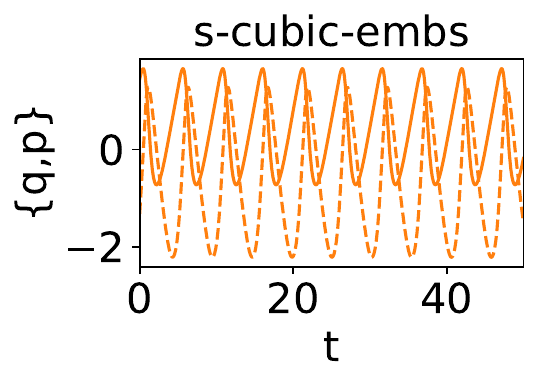}
		\caption{The best scenario for \cubicembs.}
	\end{subfigure}
	\begin{subfigure}[b]{1\textwidth}
		\includegraphics[width=0.24\linewidth]{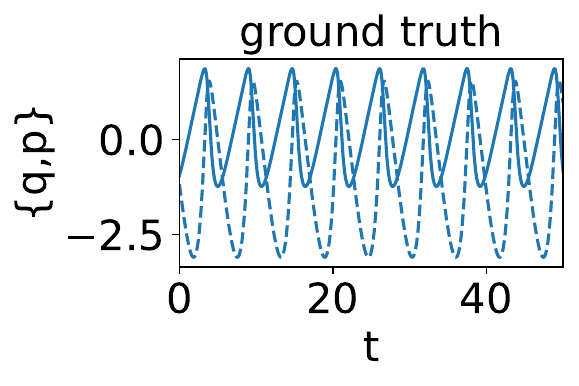}				\includegraphics[width=0.24\linewidth]{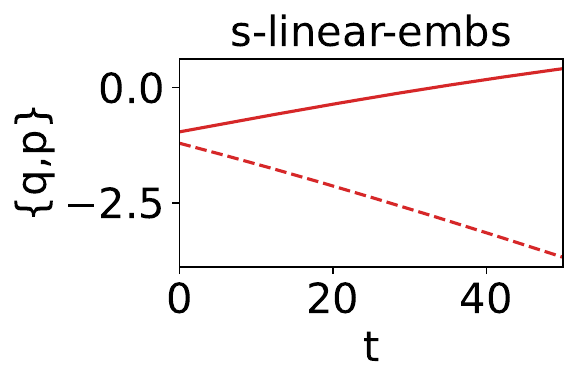}	
		\includegraphics[width=0.24\linewidth]{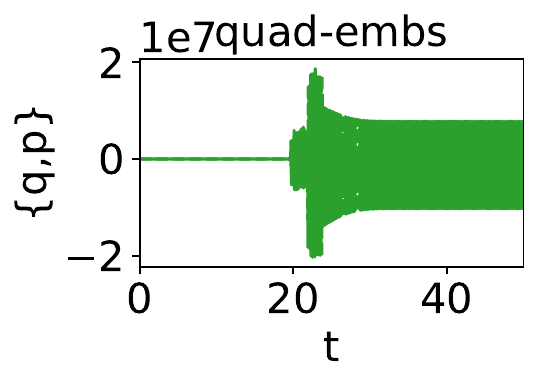}
		\includegraphics[width=0.24\linewidth]{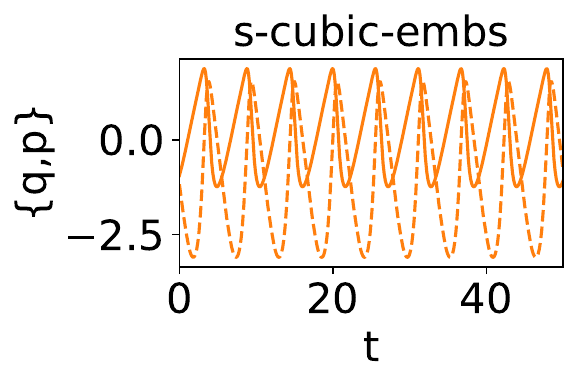}
		\caption{The worst scenario for \cubicembs.}
	\end{subfigure}
	\caption{Lotka–Volterra model: Comparisons of the time-domain simulations using the learned different embeddings test conditions for which \cubicembs\ performs the best and worst among the considered $25$ test conditions with respect to the ground truth.}
	\label{fig:lv_time}
\end{figure}

\section{An Extension to High-Dimensional Data}\label{sec:high_dimensional_case}
The dynamics of various physical processes are governed by partial differential equations. As a result, the data collected for such processes are often high-dimensional. However, these data often exhibit significant spatial and temporal correlations, allowing them to represent using a low-dimensional manifold effectively. To obtain such a low-dimensional representation, we utilize the dominant POD basis obtained through an SVD of the high-dimensional training data. Furthermore, since we deal with symplectic data and assume that the underlying model is also symplectic, we incorporate the concept of co-tangent lifting \cite{peng2016symplectic} to determine the dominant basis using the high-dimensional snapshots, as it preserves the symplectic structure.  Both the position and momenta data are concatenated in the snapshot matrix. For this, let us denote the dominant basis by $\bV$. It is then followed by determining the projection matrix as $\cV = {\small \begin{bmatrix} \bV & 0 \\ 0 & \bV \end{bmatrix}}$.
Using the projection matrix, we derive a low-dimensional representation, which we refer to it as POD coordinates. The time-derivative information for these coordinates is estimated using a five-points stencil method. These coordinates are then employed for learning suitable embeddings using \linearembs, \quadembs, and \cubicembs\ to describe the dynamics of the POD coordinates. In the following, we present two examples with high-dimensional data and discuss the efficiency of the proposed methodologies.

\subsection{Nonlinear Schrödinger equation}
Towards learning dynamics using high-dimensional data, we begin by considering the nonlinear Schrödinger (NLS) equation, which has found extensive applications in analyzing the waves on the water surface and the propagation of the light in fibers. We consider  the cubic Schrödinger equation given by \cite{karasozen2018energy}:
\begin{equation}\label{eqn:schrod}
\begin{aligned}
& i \dfrac{\partial u}{\partial t} + \dfrac{1}{2} u_{\zeta\zeta} +  |u|^2 u=0, \\
& u(0,\zeta) = u_0(\zeta), & \zeta \ \in \; \Omega := [-10,10],
\end{aligned}
\end{equation}
subject to periodic boundary conditions, and $u_{\zeta\zeta}$ denotes the double-derivative of $u$ with respect to the space $\zeta$. To obtain the canonical Hamiltonian form of the NLS equation \eqref{eqn:schrod}, we express the complex-valued solution $u$ in terms of its imaginary and real parts, i.e., $u=q+ \imath p$. The Hamiltonian of the NLS equation is then given by
\begin{equation*}
\Hamiltonian(u) = \dfrac{1}{2}\int_{\Omega} \bigg[
\alpha\bigg(\dfrac{\partial q}{\partial \zeta}\bigg)^2+
\alpha\bigg(\dfrac{\partial p}{\partial \zeta}\bigg)^2-
\dfrac{\beta}{2} (q^2+p^2)^2
\bigg]\,d\zeta.
\end{equation*}

Following \cite{yildizetal23}, we use a three-point central difference approximation for the $\partial_{\zeta\zeta}$ and consider  256 points in the designated spatial domain. We collect the data using the initial condition $u_0(\zeta)=\text{sech}(\zeta/2)$ over a time-interval  $[0,160]$, with a total of $3200$ points within the time-interval. For our analysis, we divide the data into two halves: the first half is utilized for training purposes, while the second half is reserved for testing the performance of the different methods.

Since the data are high-dimensional ($N = 2\cdot 256$), our first step is to identify the POD coordinates. For this, we employ the SVD of the training data and observe its decay in \Cref{fig:NLS_svd}. We notice a reasonable decay, thus indicating a good low-dimensional representation using a linear projection. By considering the first two POD bases, we capture more than $94\%$ of the energy present in the training data. As a result, we have four-dimensional POD coordinates by utilizing the co-tangent projection of the high-dimensional data. 
\begin{figure}[tb]
	\centering
	\begin{subfigure}[b]{0.35\textwidth}
		\includegraphics[width=1\linewidth]{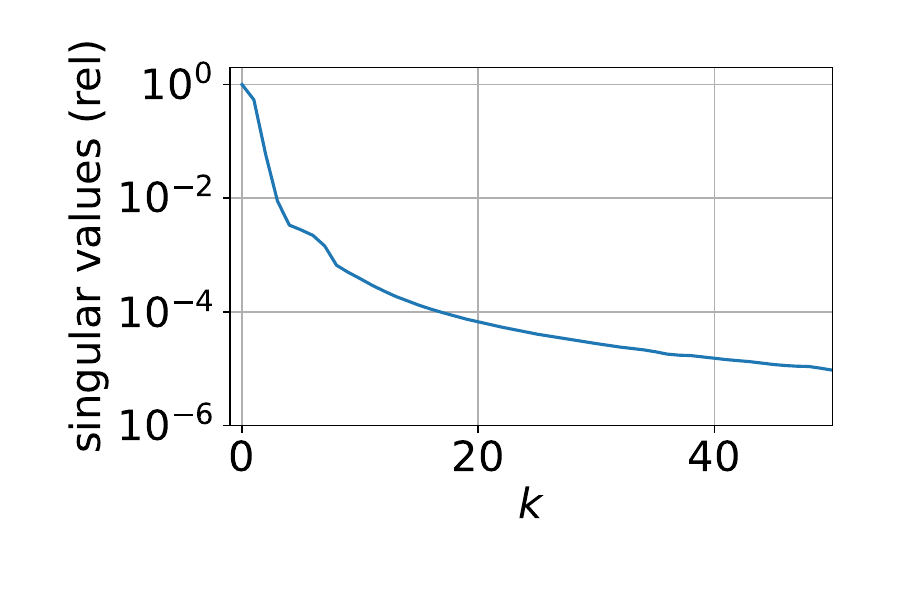}
	\end{subfigure}
	\begin{subfigure}[b]{0.35\textwidth}
		\includegraphics[width=1\linewidth]{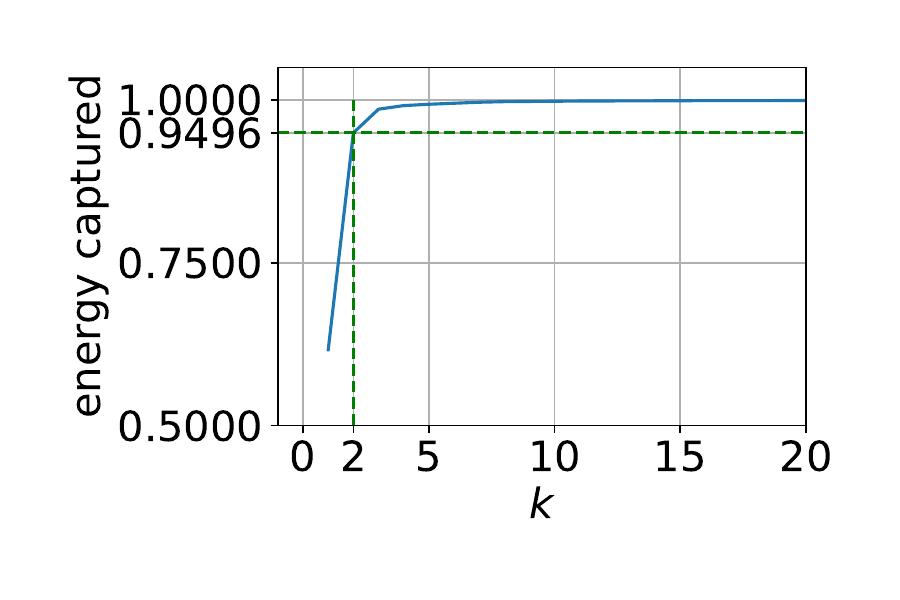}
	\end{subfigure}
	\caption{NLS example: The figure shows the delay of the singular values, obtained using the training data. The right plot indicates the energy captured using $2$ dominant modes.}
	\label{fig:NLS_svd}
\end{figure}

Having obtained the POD coordinates, our next objective is to learn appropriate embeddings of dimension four using three different methodologies. In \Cref{fig:nls_pod_coeffs}, we depict the reconstruction of the POD coordinates through these identified embeddings using different methods. We observe that \quadembs\ fails to capture the dynamics in both training and testing phases, whereas \linearembs\ and \cubicembs\ perform well in capturing the high-level dynamic features. This is more apparent when the learned solutions are plotted in the phase-space in the testing phase; see \Cref{fig:nls_pod_coeffs_phasesapce} (first four columns).

\begin{figure}[tb]
	\centering
	\begin{subfigure}[t]{0.24\textwidth}
		\includegraphics[width=1\linewidth]{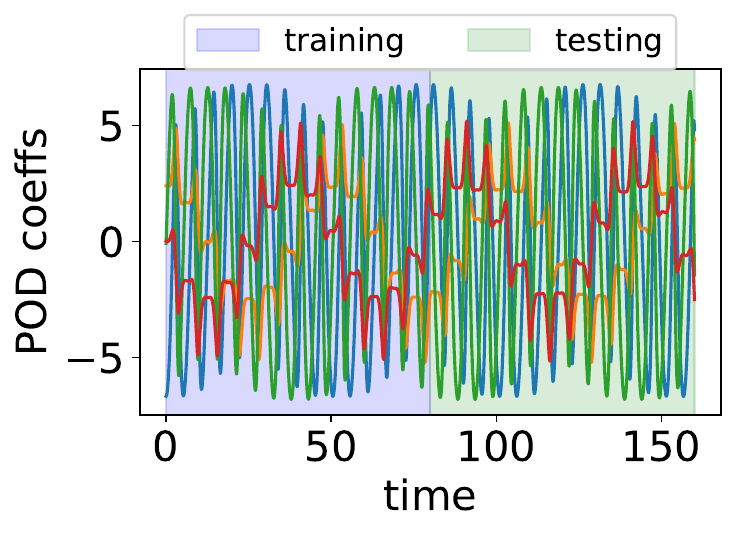}			
		\caption{ground truth}
	\end{subfigure}
	\begin{subfigure}[t]{0.24\textwidth}
		\includegraphics[width=1\linewidth]{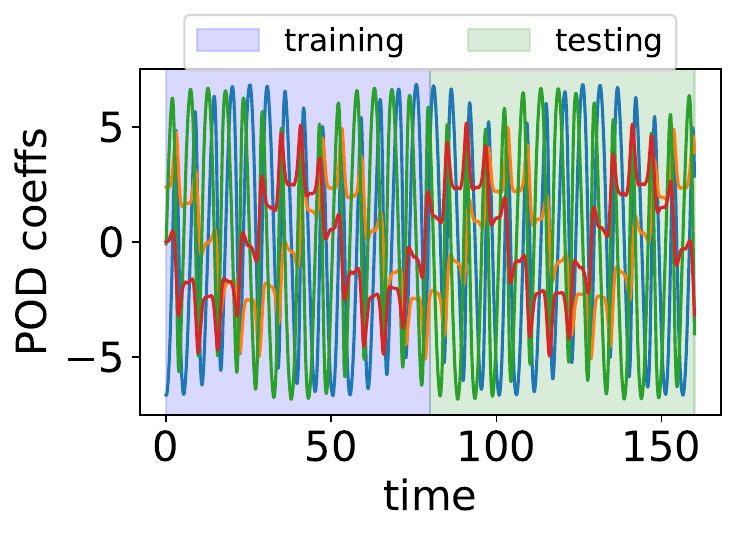}
		\caption{using \linearembs}
	\end{subfigure}
	\begin{subfigure}[t]{0.24\textwidth}
		\includegraphics[width=1\linewidth]{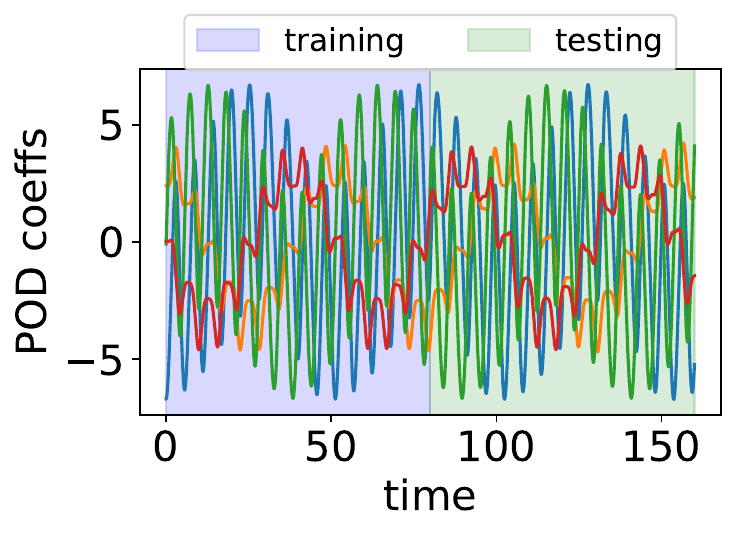}
		\caption{using \quadembs}
	\end{subfigure}
	\begin{subfigure}[t]{0.24\textwidth}
		\includegraphics[width=1\linewidth]{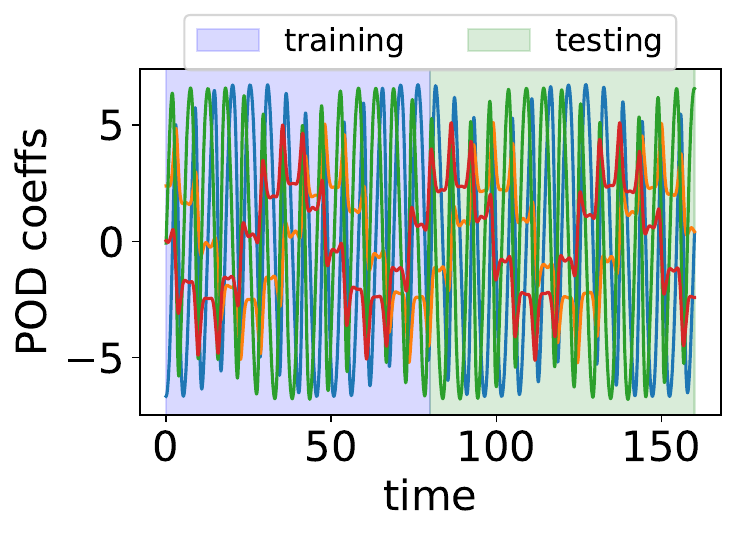}
		\caption{using \cubicembs}
	\end{subfigure}
	\caption{NLS model: A comparison of the transient responses of the learned models using different methods with the ground truth model.}
	\label{fig:nls_pod_coeffs}
\end{figure}

\begin{figure}[tb]
	\centering
	\begin{subfigure}[t]{1\textwidth}
		\includegraphics[width=0.19\linewidth]{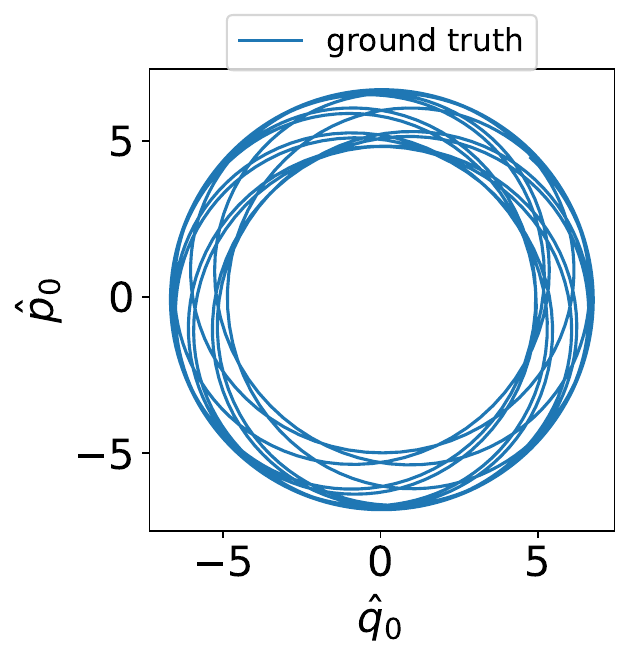}
		\includegraphics[width=0.19\linewidth]{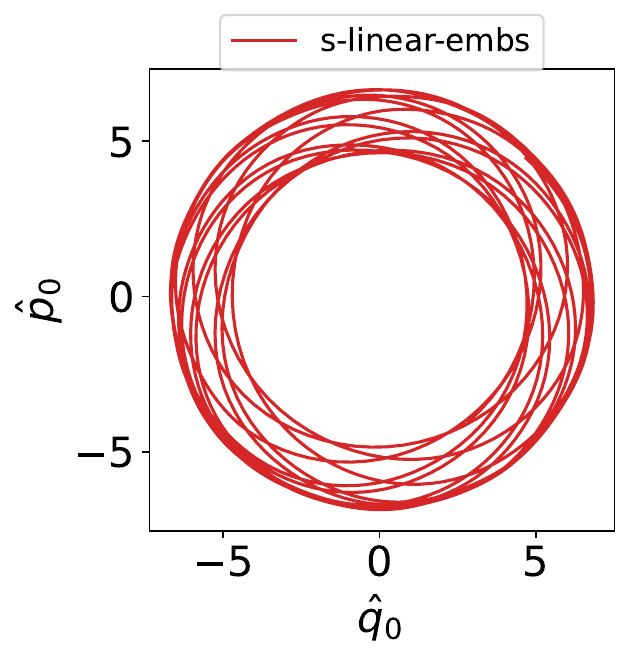}
		\includegraphics[width=0.19\linewidth]{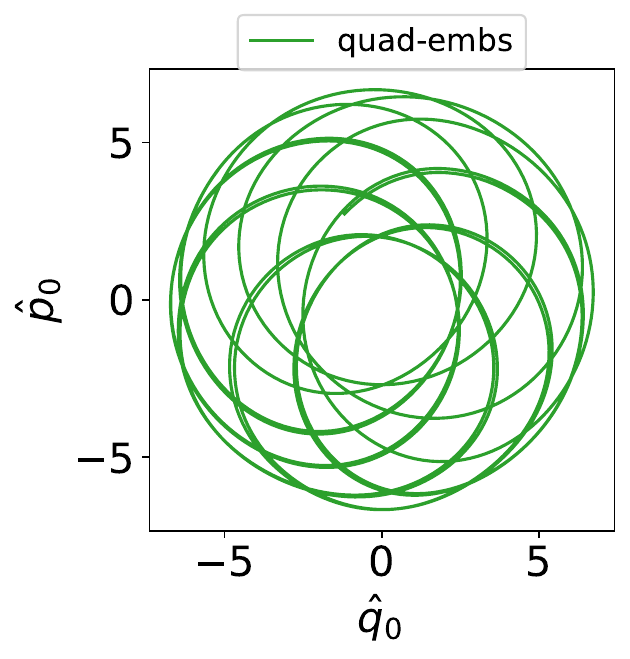}
		\includegraphics[width=0.19\linewidth]{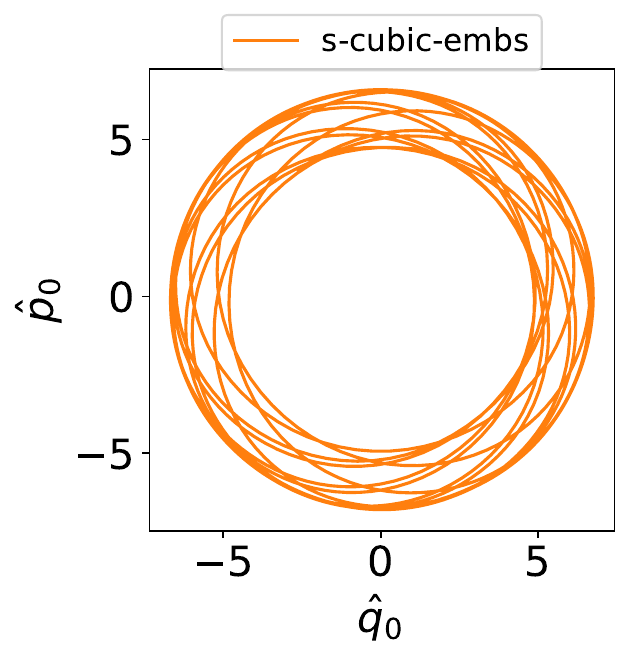}
		\includegraphics[width=0.19\linewidth]{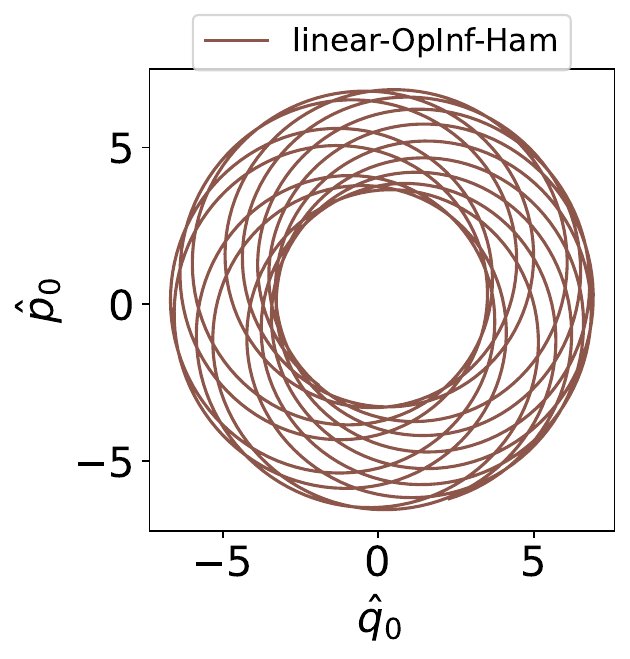}
		\caption{Phase space for $(\hat q_0, \hat p_0)$.}
	\end{subfigure}
	\begin{subfigure}[t]{1\textwidth}
		\includegraphics[width=0.19\linewidth]{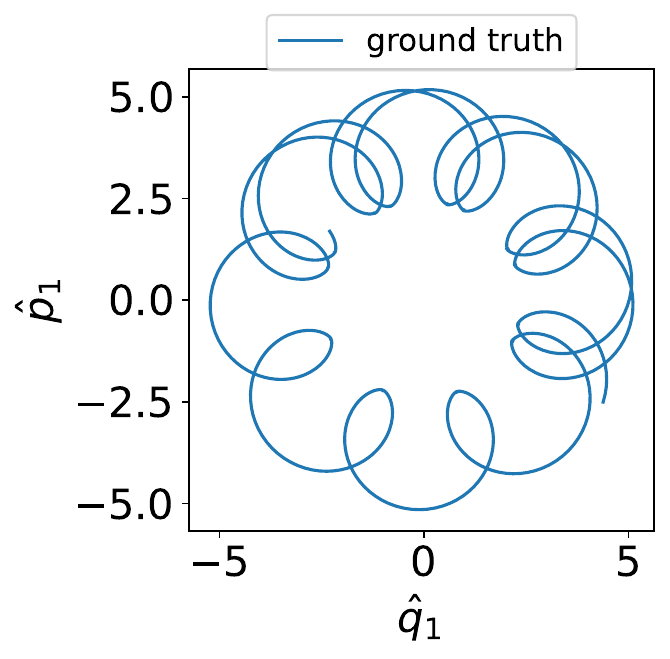}
		\includegraphics[width=0.19\linewidth]{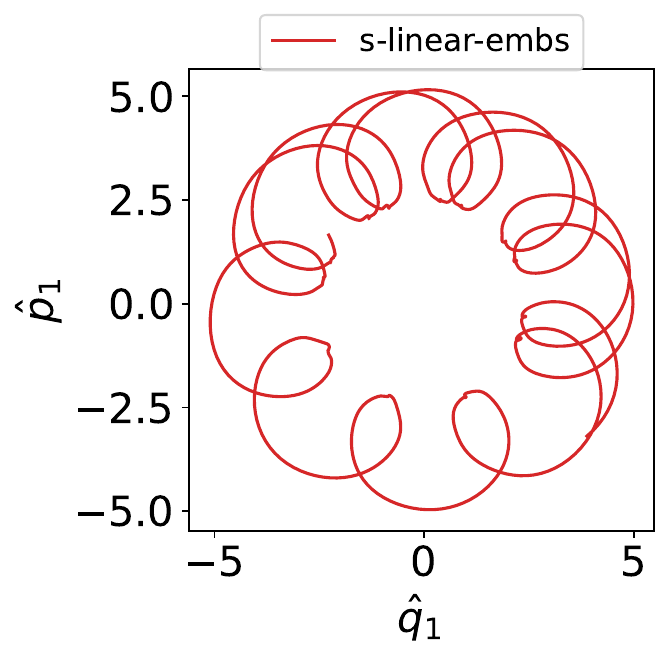}
		\includegraphics[width=0.19\linewidth]{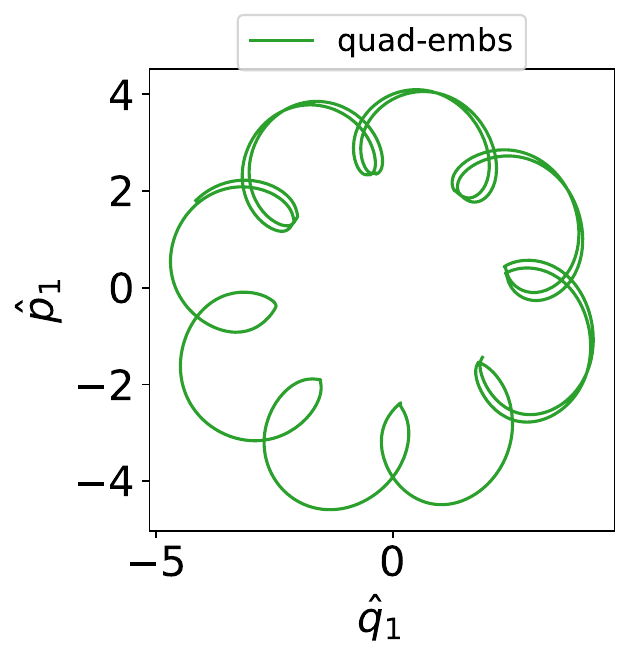}
		\includegraphics[width=0.19\linewidth]{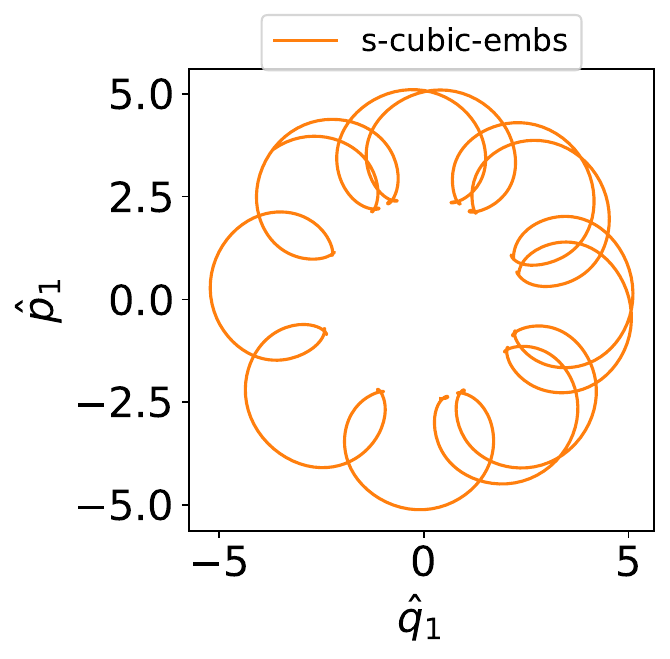}
		\includegraphics[width=0.19\linewidth]{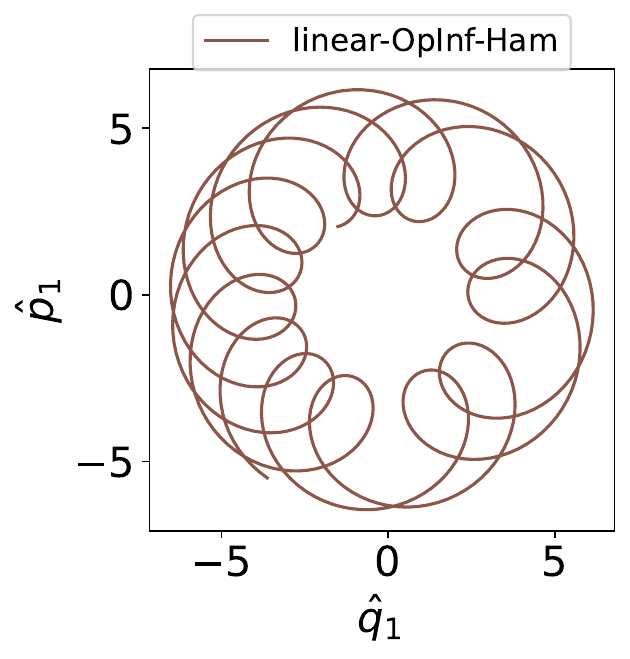}
		\caption{Phase space for $(\hat q_1, \hat p_1)$.}
	\end{subfigure}
	\caption{NLS model: A comparison of different methods in the testing phase. We show the positions and momenta, which are part of the POD coordinates in phase space.}
	\label{fig:nls_pod_coeffs_phasesapce}
\end{figure}

For a qualitative comparison, in \Cref{tab:nls_model_pod_err}, we also report the relative $L_2$-norm of the error for the POD coefficients for the training and testing phases. We notice that \linearembs\ yields the best results among the considered cases. Intuitively, one might expect \cubicembs\ to be as good as \linearembs\ since setting blocks of $\bQ$ in \eqref{eq:hamiltonain_quartic} to zero boils down to \linearembs\ case. However, when the dynamics are obtained using a numerical integration method, an over-fitting is observed for \cubicembs, thus leading to poorer performance than \linearembs. But the performance of \cubicembs\ can be improved by fine-tuning hyper-parameter, e.g., with respect to the $L_2$-regularization of the parameters for \cubicembs. Finding good hyper-parameters is our potential avenue for our future research.

\begin{table}[tb]
	\begin{tabular}{|c|c|c|c|} \hline
		& \multicolumn{3}{c|}{Method}                                                                                \\ \hline
		& \linearembs & \quadembs & \cubicembs  \\ \hline
		Training phase & {$\mathbf{4.37\cdot 10^{-3}}$}                          & $3.37\cdot 10^{-1}$                         & $2.77\cdot 10^{-1}$                                             \\ \hline
		Testing phase  & $\mathbf{2.69\cdot 10^{-2}}$                         & $1.60\cdot 10^{0}~~$                        & $1.34\cdot 10^{0}~~$                         \\   \hline                 
	\end{tabular} 
	\caption{NLS model: The table shows the relative $ L_2$ errors of the POD coordinates in the training and testing phases obtained using various methods.}
	\label{tab:nls_model_pod_err}
\end{table}

Furthermore, we conducted a  performance comparison between \linearembs\ and the operator inference (\opinf) approach \cite{sharma2022hamiltonian,gruber2023canonical} that also preserves the canonical Hamiltonian structure. Using \opinf, we also learn a linear system, aiming to describe the dynamics of the POD coordinates. In \Cref{fig:nls_pod_coeffs_phasesapce}, we show the results for \opinf\ (the fifth column) in the testing, clearly demonstrating that \opinf\ fails to capture the dynamics. These findings underscore the significance of learning the correct embeddings for dynamic system modeling, which we achieve, for example,  for \linearembs\ by unleashing the power of deep learning.

Lastly, we aim at reconstructing the solution over the considered entire spatial domain using the POD coordinates. We explore three different approaches for this purpose. 
The first and simplest approach involves using the POD basis to reconstruct the solution, where the solution is reconstructed through a linear combination of the POD coordinates. We refer to it as \emph{linear-decoder}. Next, we consider the quadratic-manifold approach \cite{geelen2023operator}, which constructs the solution on the entire spatial domain using a quadratic function of the POD coordinates. As a result, we refer to it as \emph{quad-decoder}. Finally, we explore a convolution neural network (CNN) based approach to reconstruct the solution using the POD coordinates. We refer to it as \emph{convo-decoder}.  The architecture of the CNN is discussed in \Cref{appendix:decoder}, and therein, the training of both the quadratic and CNN-based decoders is also provided. 

Using these decoder approaches, we plot the reconstructions of the solution of the full spatial domain in \Cref{fig:nls_decoder}. Note that we have used the learned embeddings, obtained using \linearembs\ as it performs the best among the considered ones (see \Cref{tab:nls_model_pod_err}).
Upon evaluation, we notice that all three decoder approaches demonstrate reasonable performance. Particularly, we emphasize that the linear-decoder approach also performs well since more than $94\%$ of the energy has been captured by the selected POD modes.
Moreover, we notice that the quadratic decoder works slightly worse than the linear decoder in terms of solution reconstruction in the testing phase. It could be due to the over-fitting of the quad-decoder despite utilizing the $L_2$ regularizer. On the other hand, the convo-decoder emerges as the most effective method among those considered (notice the limits of the bar-plots in \Cref{fig:nls_decoder} and \Cref{tab:nls_model_decoder}). 

\begin{figure}[tb]
	\centering
	\includegraphics[width=.99\linewidth, trim = 0cm 0cm 0cm 0cm, clip]{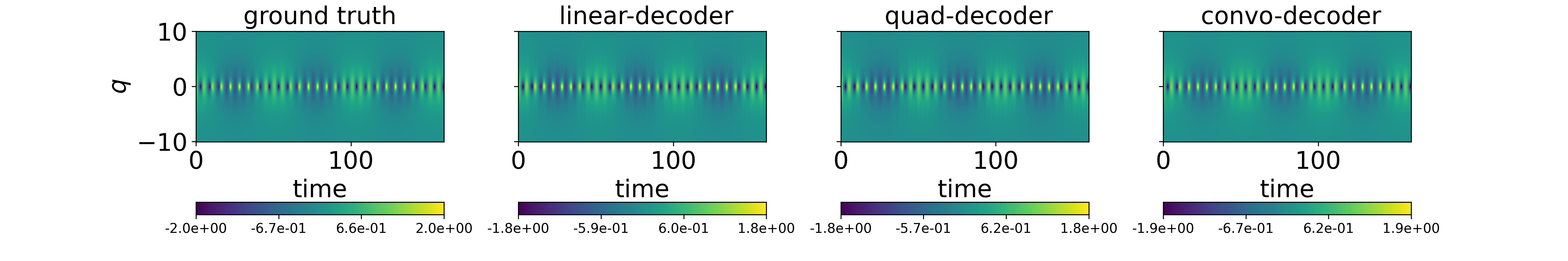}			
	\includegraphics[width=0.99\linewidth, trim = 0cm 0cm 0cm 0cm, clip]{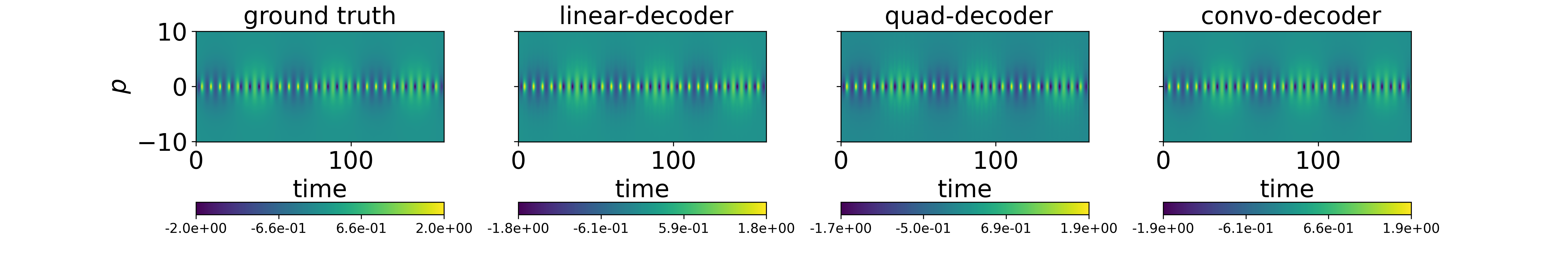}
	\caption{NLS example: A comparison of the reconstructions of $\bq$ and $\bp$ using the learned POD coordinates $\hat\bq$ and $\hat\bp$ using \linearembs\ by means of various decoder methods.}
	\label{fig:nls_decoder}
\end{figure}

\begin{table}[t]
	\begin{tabular}{|c|c|c|c|} \hline
		& \multicolumn{3}{c|}{Methods}                                                                                \\ \hline
		& linear-decoder & quad-decoder & convo-decoder  \\ \hline
		Training phase & {$ 7.10\cdot 10^{-4}$}                          &  $7.40\cdot  10^{-4}$                        & $\mathbf{4.62\cdot 10^{-4}}$                                             \\ \hline
		Testing phase  & {$ 3.00\cdot 10^{-3} $}                         & $3.12\cdot 10^{-3}$                        & $\mathbf{2.78\cdot 10^{-3}}$                          \\   \hline                 
	\end{tabular} 
	\caption{NLS model: The table shows the mean $L_2$-errors of the reconstructed and ground truth solutions using various decoder approaches.}
	\label{tab:nls_model_decoder}
\end{table}
\subsection{Wave example}
In our last example, we consider a linear wave equation of the form:
\begin{equation}\label{eqn:wave}
\begin{aligned}
&u_{tt} =u_{\zeta\zeta},\\
& u(t_0,\zeta) = u_0(\zeta), & \zeta \in \Omega:=[-10,10],  
\end{aligned}
\end{equation}
with a periodic boundary condition. The wave equation is an example of a Hamiltonian PDE \cite{bridges2006}, and its corresponding Hamiltonian is defined as 
\begin{equation*}
\Hamiltonian(u) = \dfrac{1}{2}\int_{\Omega}
\left(q_\zeta^2+p^2\right)\,d\zeta,
\end{equation*}
where $p=u_t$ and $q=u$, and $q_\zeta$ denotes the partial derivative of $q$ with respect to $\zeta$. Discretizing the PDE as discussed in \cite{yildizetal23}, we can arrive at a discretized model as follows:
\begin{equation}\label{eqn:Wave-Ham-ODE}
\frac{d \bl z}{d t}=\bl K \bl z, \quad \text{with} ~~\bl z=\begin{bmatrix}
\bl q\\ \bl p
\end{bmatrix} ~~\text{and}~~ \bl{K}=\begin{bmatrix}
\mathbf{0}& \bI_N\\ \bD_{\zeta\zeta}& \mathbf{0}
\end{bmatrix}.
\end{equation}
In the above equation, $ \bD_{\zeta\zeta}\in \mathbb{R}^{N\times N}$ denotes an approximation of $\partial_{\zeta\zeta} $, $\mathbf{0}\in \mathbb{R}^{N\times N}$ and $\bI_N\in \mathbb{R}^{N\times N}$ are the zero and identity matrices, and $(\bl q, \bl p)$ are the discretized $(q,p)$, and $N$ denotes the number of grid points in the space.

Next, we generate training data using a parameterized initial condition $u_0(\zeta;\mu)$, where $u_0(\zeta;\mu)=\text{sech}(\mu \cdot \zeta)$ with $\mu$ ranging from $0.5$ to $1.4$. We collect data using $10$ different values of $\mu$, i.e., $\mu = \{0.5, 0.6, \ldots, 1.3, 1.4\}$. Among these, we designate three values of $\mu$ ($\mu = \{0.7, 1.0, 1.2\}$) for testing, while the remaining seven values serve as training data.  Moreover, we consider $256$ equidistant points in the spatial domain $\Omega = [-10,10]$, resulting in a model of dimension $512$. For each training initial condition, we consider $501$ points in the time interval $[0,25]$. 

As considered in the previous example, we first aim to project the high-dimensional data onto a low-dimensional subspace and obtain the POD coordinates. With that goal, in \Cref{fig:wave_svd}, we examine the decay of the singular values, indicating a relatively slow decay of it. It is expected as the considered system has a transport phenomenon, which often has a slow-decay of Kolmogorov $N$-width \cite{kolmogoroff1936uber, greif2019decay}. Next, we consider the three dominant modes to determine the projection matrix, which captures only $52.7\%$ of the energy present in the training data. We obtain the POD coordinates using the principle of co-tangent lifting for canonical Hamiltonian systems, encompassing the reduced momenta and position, respectively, denoted by $\hat\bp \in \R^3$ and $\hat\bq \in \R^3$. 

\begin{figure}[tb]
	\centering
	\begin{subfigure}[b]{0.35\textwidth}
		\includegraphics[width=1\linewidth]{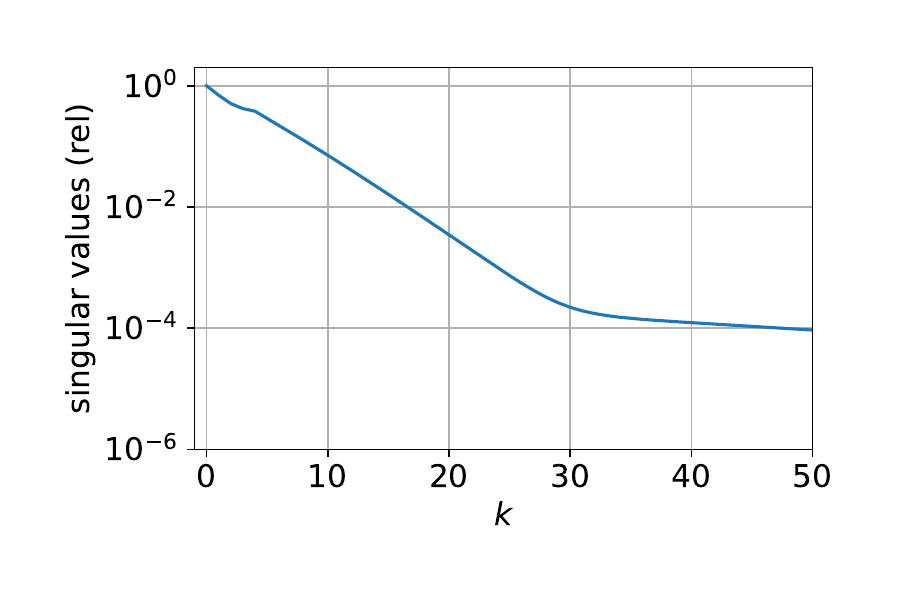}
	\end{subfigure}
	\begin{subfigure}[b]{0.35\textwidth}
		\includegraphics[width=1\linewidth]{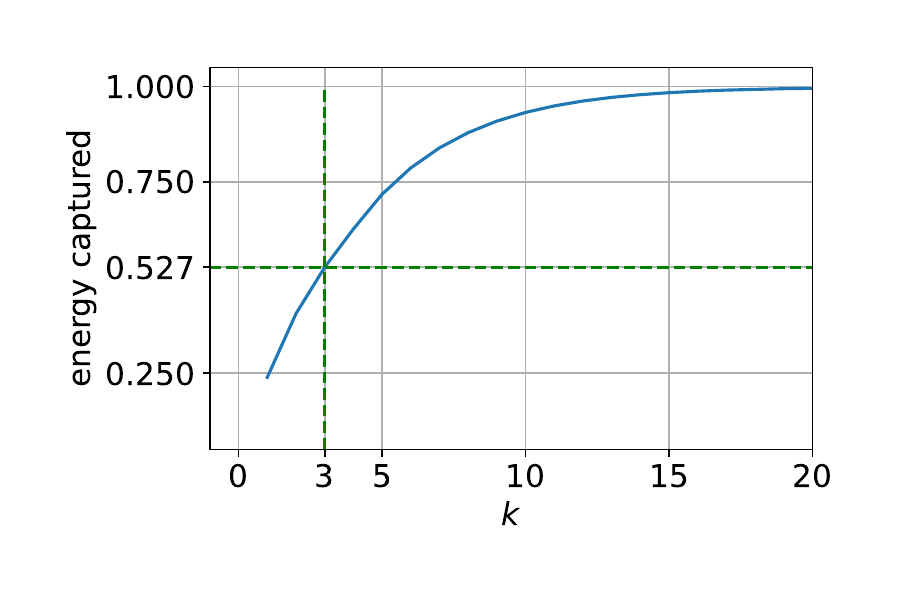}
	\end{subfigure}
	\caption{Wave example: The figure shows the delay of the singular values obtained using the training data. The right plot shows the energy captured using $3$ dominant modes.}
	\label{fig:wave_svd}
\end{figure}

Next, using $\hat\bq$ and $\hat\bp$, we learn the desired embeddings using \linearembs, \quadembs, and \cubicembs\ to learn the dynamics of the POD coordinates through these embeddings. For comparison, we also include the \opinf\ approach, which we discussed in the previous example.
To evaluate the performance of these approaches, we utilize three left-out test initial conditions. Note that the test initial conditions and the dynamics evolution using the test initial conditions are high-dimensional, which are then projected using the same projection matrix determined in the training phase to obtain the POD coordinates.
In \Cref{fig:wave_test_err}, we present a qualitative evaluation of their performances based on the error measure \eqref{eq:error_measure}.
It is interesting to note that all methods show remarkably similar performance. Surprisingly, \opinf\ with linear models is able to capture the dynamics of the POD coordinates well, indicating that there is no particular need to learn a nonlinear coordinate transformation since the POD coordinates themselves are good, and their dynamic evaluation can be accurately learned using a linear system. 

\begin{figure}[tb]
	\centering
	\includegraphics[width=0.4\linewidth]{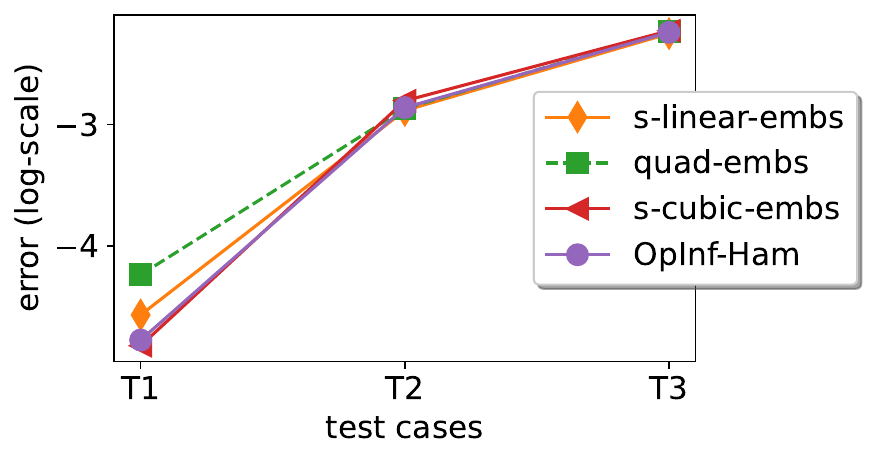}				
	\caption{Wave example: A comparison of all three considered methodologies on test cases.}
	\label{fig:wave_test_err}
\end{figure}

Next, using the POD coordinates, we focus on constructing the solution on the full-spatial domain using the POD coordinates. Similar to the previous example, we employ three methods: linear-decoder, quad-decoder, and convo-decoder. Although all four approaches, namely \linearembs, \quadembs, \cubicembs, and \opinf\ perform competitively to learn the dynamics for POD coordinates, we use the recovered POD coordinates using \linearembs, to be consistent with the previous example. For two of the test scenarios, in \Cref{fig:wave_decoder}, we present the reconstructed solution on the entire domain using various decoder approaches. In \Cref{tab:wave_err_dec}, we also report the mean $L_2$-errors between different reconstructed solutions and the ground truth.  

Our observation reveals that the linear-decoder performs very poorly, which is to be expected, considering that the data has slow-decaying Kolmogorov $N$-widths \cite{greif2019decay} and the three POD basis capture only $52.7\%$ of the energy present in the training data. With the quad-decoder, the quality of the reconstruction improves as compared to the linear-decoder but still exhibits artifacts; see the third column in \Cref{fig:wave_decoder}. On the other hand, the convo-decoder outperforms the other two considered approaches to reconstruct the solution, uncovering the power of neural networks in this context and helping in breaking Kolmogorov $N$-width barrier \cite{peherstorfer2022breaking}. 

\begin{figure}[tb]
	\centering
	\begin{subfigure}[b]{1\textwidth}
		\includegraphics[width=1\linewidth]{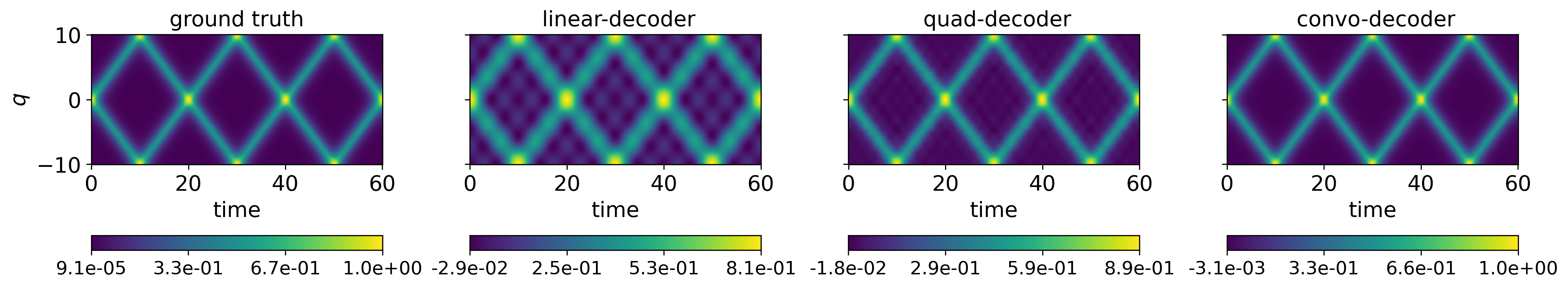}			
		\includegraphics[width=1\linewidth]{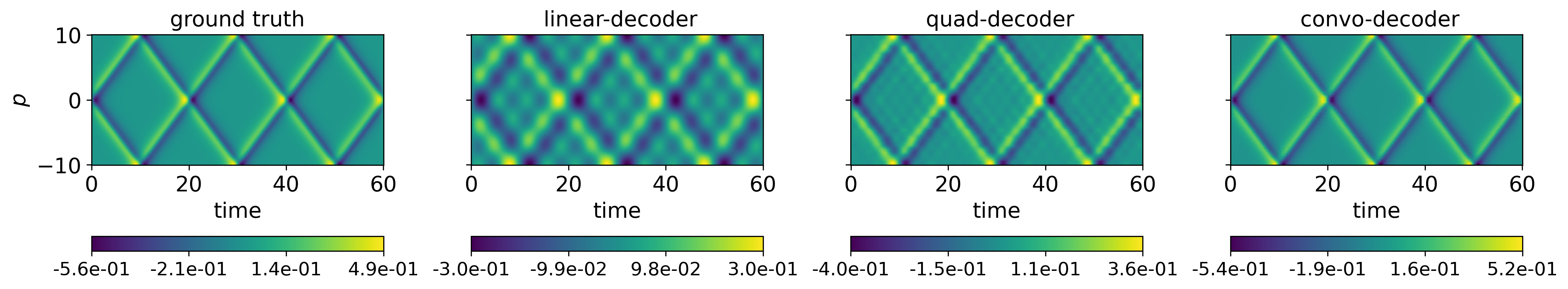}
		\caption{For testing case, $\mu = 1.0$.}
	\end{subfigure}
	\begin{subfigure}[b]{1\textwidth}
		\includegraphics[width=1\linewidth]{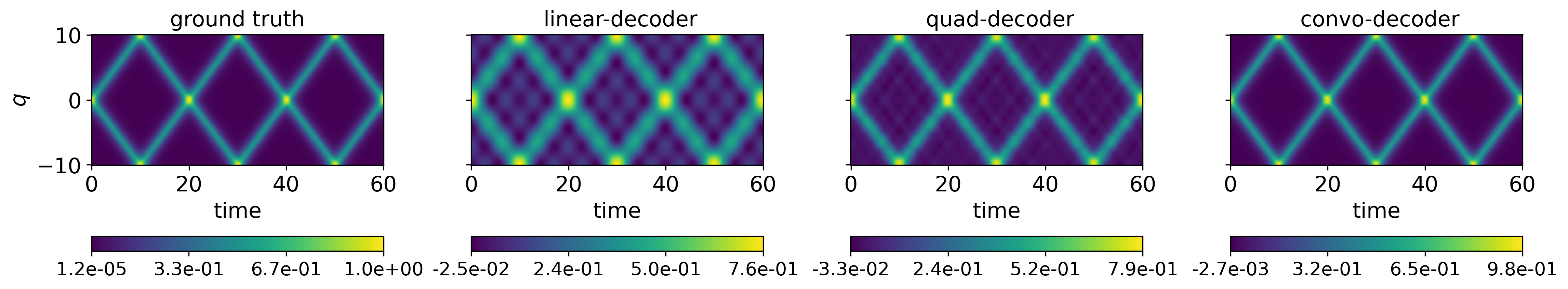}			
		\includegraphics[width=1\linewidth]{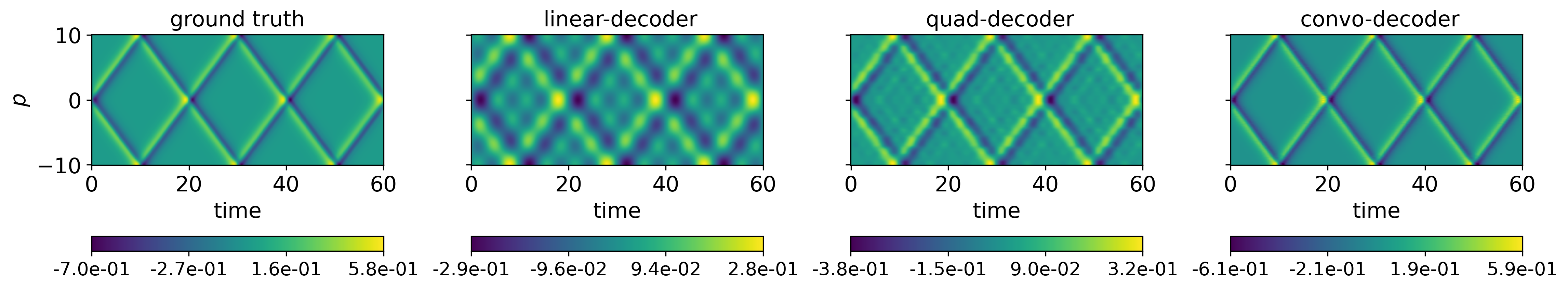}
		\caption{For testing case, $\mu = 1.2$.}
	\end{subfigure}
	\caption{Wave example: A comparison of the reconstruction of $\bq$ and $\bp$ using the POD coordinates $\hat\bq$ and $\hat\bp$.}
	\label{fig:wave_decoder}
\end{figure}

\begin{table}[tb]
	\begin{tabular}{|c|c|c|c|} \hline
		& \multicolumn{3}{c|}{Methods}                                                                                \\ \hline
		& linear-decoder & quad-decoder & convo-decoder  \\ \hline
		Test for $\mu = 0.7$ & { $2.01 \cdot 10^{-3} $}                          & $ 1.88\cdot 10^{-4}$                        & $\mathbf{1.40\cdot 10^{-5}}$                                             \\ \hline
		Test for  $\mu = 1.0$  & { $5.26\cdot 10^{-3}$ }                         & $9.54\cdot 10^{-4}   $                     & $\mathbf{1.06\cdot 10^{-4}}$                          \\   \hline     
		Test for $\mu = 1.2$  & { $9.18\cdot 10^{-3}$ }                         & $2.84\cdot 10^{-3}   $                     & $\mathbf{4.99\cdot 10^{-4}}$                          \\   \hline                     
	\end{tabular} 
	\caption{Wave model: The table shows the mean $L_2$-errors of the reconstructed and ground truth solutions using various decoder approaches.}
	\label{tab:wave_err_dec}
\end{table}

\section{Conclusions}\label{sec:conclusions}
To conclude, we have explored the capabilities of deep learning to learn embeddings that aim to provide a universal linearized framework for nonlinear canonical Hamiltonian systems. It can be viewed as a structure-preserving finite-dimensional Koopman operator learning for nonlinear canonical Hamiltonian systems.
Designing embeddings with the desired properties is inherently challenging. To address this, we employed the autoencoder framework, which proved to be instrumental in achieving our objectives.
In addition, we have overcome the challenges associated with learning global linearization for nonlinear systems featuring a continuous eigenvalue spectrum by learning a global cubicized framework. 
It is inspired by the lifting principle \cite{savageau1987recasting,morGu11}, allowing us to express nonlinear systems as cubic systems in a finite-dimensional embedding. 
Furthermore, we delved into how to enforce stability with regard to the dynamics of the learned embeddings, and it is achieved by exploring a sum-of-squares formulation for the underlying Hamiltonian. We have demonstrated the efficiency of the proposed methodologies for low-dimensional cases. %
We additionally have discussed a way to deal with high-dimensional data. To handle such cases, we utilized proper-orthogonal decomposition (POD) to derive a low-dimensional representation of these high-dimensional data. We then leveraged this representation to learn a suitable embedding with the desired properties.
Moreover, from these low-dimensional representations obtained using POD, we investigated various decoder approaches, namely, linear-decoder, quadratic-decoder, and convolutional neural network-based decoder, to reconstruct the solutions on the full spatial domain. Notably, we observed that convolutional neural networks can perform well when the POD basis does not capture enough of the energy present in the training data. Hence, with a nonlinear decoder based on a convolutional neural network, a good reconstruction with only a few POD modes can be done for systems with a slow decay of the Kolmogorov $N$-width. 
Lastly,  we emphasize that incorporating prior knowledge about the canonical Hamiltonian framework played a crucial role in enabling us to learn dynamics in low-data regimes. It also enhances the interpretability of the learned coordinate system in the sense that it obeys the properties of the canonical Hamiltonian system by design. 

This work opens up several challenging and important research avenues for the future. The foremost challenge is to design an autoencoder architecture and to determine the dimension of the coordinate system $\by$. Automating the design of the autoencoder architecture using neural architecture search, see, e.g., \cite{elsken2019neural}, would be highly beneficial, as well as exploring methods to determine the optimal dimensionality of the coordinate system. 
Additionally, in practical scenarios, the presence of noise in data is inevitable. Therefore, it will be crucial to adapt the proposed methodologies to handle such cases, and for this, one can employ the principles discussed, e.g., in \cite{rudy2019deep,goyal2022neural}. Furthermore, for high-dimensional data, we have determined a low-dimensional representation of these data by using POD, which can be seen as a linear projection or transformation. However, an investigation based on convolution neural networks to determine a low-dimensional projection would be worthwhile.
In the future, we also aim to apply these techniques to practical problems, particularly those arising in the study of planetary motion. 

\addcontentsline{toc}{section}{References}
\bibliographystyle{ieeetr}
\bibliography{mybib,ref}

\appendix
\section{Demonstrating Examples}
\begin{example}[Cubic Hamiltonian]\label{exm:cubicHamiltonian}
	Consider a nonlinear system with the Hamiltonian $\mathcal H (q,p)= \frac{p^2}{2}+\frac{q^2}{2} +\frac{q^3}{3}$. The associated governing equations for the oscillator are thus given by
	\begin{equation}\label{eqn:cubic_sys}
		\begin{aligned}
			&\dot{q}=p \\
			&\dot{p}=-q - q^2.
		\end{aligned}
	\end{equation}
	It is easy to see that the system \eqref{eqn:cubic_sys} is not globally stable. For large negative values for $q$ and $p$, it can be observed that $p \rightarrow -\infty$ and $q \rightarrow -\infty$ as $t\rightarrow \infty$, despite $\cH(q,p)$ being constant. Its stability can also be argued by the fact that the Hamiltonian is neither bounded from below nor from above. 
\end{example}

\begin{example}\label{exa:spd_Q}
	Consider a system with a Hamiltonian $\cH(p,q) = p^2 + q^4 $. If we aim to write the Hamiltonian in the form given in \eqref{eq:hamiltonain_quartic_spd}, then we have
	\begin{equation}
		\cH(p,q) = \begin{bmatrix} q\\ p\\ q^2\\ qp \\ pq \\ p^2 \end{bmatrix}^\top  \underbrace{\begin{bmatrix} 0 & 0& 0& 0& 0 & 0 \\ 0 & 0& 0& 0& 0 & 0 \\ 0 & 0& 1& 0& 0 & 0\\ 0 & 0& 0& 0& 0 & 0\\ 0 & 0& 0& 0& 0 & 0\\0 & 0& 0& 0& 0 & 1 \end{bmatrix}}_{\bQ}  \begin{bmatrix} q\\ p\\ q^2\\ qp \\ pq \\ p^2 \end{bmatrix}
	\end{equation}
	It can be noted that $\bQ = \bQ^\top \geq 0$. To decompose $\bQ$ to write in the form given in \eqref{eq:decompose_Q}, we define 
	\begin{equation}
		\bV = \begin{bmatrix} 0 & 0& 1& 0& 0 & 0 \\0 & 0& 0& 0& 0 & 1 \\ 1 & 0& 0& 0& 0 & 0 \\ 0 & 1& 0& 0& 0 & 0 \\  0 & 0& 0& 1& 0 & 0\\ 0 & 0& 0& 0& 1 & 0 \end{bmatrix}.
	\end{equation}
	First, note that $\bV^\top\bQ \bV = \begin{bmatrix}
		\bQ_1 & \mathbf{0} \\  \mathbf{0} &  \mathbf{0}
	\end{bmatrix}$, where $\bQ_1 = \bI_2$ and $ \mathbf{0}$ is a zero-matrix of the appropriate size. Furthermore, notice that 
	$\left\|\bV\begin{bmatrix} \bz \\ \bz \otimes z \end{bmatrix}\right\|_2 = q^4 +  p^2 =: \bg(\bz)$, where $\bz = [q,p]$. It can be noted that $\bg(\bz)$ is non-negative, and for $\|\bz\| \rightarrow \infty$, $\bg(\bz) \rightarrow \infty$. Since $\bg(\bz)$ is finite constant (as it is a Hamiltonian, corresponding to the underlying dynamical system), $\|\bz\|$ cannot be infinite; otherwise, $\bg(\bz)$ must also be infinite. Using these arguments, the dynamics will be stable, and $\|\bz(t)\|_2$ will be bounded $\forall t \geq 0$. 
\end{example}

\section{Training Set-up}\label{appendix:training}
First note that all the experiments are done using \texttt{PyTorch} on a machine with an \texttt{Intel\textsuperscript{\tiny\textcopyright} Core\textsuperscript{\tiny TM} i5-12600K} CPU and \texttt{NVIDIA RTX\textsuperscript{\tiny TM} A4000(32GB)} GPU.
We set the number of epochs for each example to $4000$ and the initial learning rate to $3\cdot 10^{-3}$. We have used the Adam optimizer \cite{kingma2014adam} to train the parameters for the autoencoder and for the Hamiltonian, which defines the dynamics of the learned coordinate system or embedding. Furthermore, we utilize a step decay for the learning rate, using the \texttt{StepLR} implementation in \texttt{PyTorch} with a step of $0.1$.   For each experiment, we set the hyper-parameters $(\lambda_{1}, \lambda_{2}, \lambda_{3})$ in \eqref{eq:total_loss} to $(0.1,1.0,1.0)$. 
For the autoencoder architecture, we use a multi-layer perceptron (MLP) architecture with three hidden layers and SeLU activation, and their number of neurons are given in \Cref{tab:hyperparameters-low}. The table also contains additional hyper-parameters for our illustrative examples. Note that all these parameters are derived based on a combination of experience and quick parameter grid-search, but a systematic and automatic way to determine them remains a future research topic. 
Furthermore, we control the parameters, defining the Hamiltonian for the coordinate system, by penalizing using the weighted $L_1$-norm, and the weight for all examples is set to $10^{-4}$.

\begin{table}[tb]
	\renewcommand{\arraystretch}{1.25}
	\caption{The table contains all the hyper-parameters to learn the dynamics of the considered examples. We note that $\texttt{wd}^{(a)}$ and  $\texttt{wd}^{(h)}$, respectively, denote weight decay parameters for the learnable parameters for the autoencoder and Hamiltonian for the transformed coordinate system. }
	\label{tab:hyperparameters-low}
	\begin{tabular}{|c|c|c|c|c|}
		\hline
		{\bfseries Examples/Parameters}                                                              & \begin{tabular}[c]{@{}c@{}}{\bfseries 		Encoder layers} \\ {\bfseries  [neurons]}\end{tabular} & \begin{tabular}[c]{@{}c@{}}{\bfseries  Coordinate} \\ {\bfseries dimension}\end{tabular} & \begin{tabular}[c]{@{}c@{}}{\bfseries Batch} \\ {\bfseries size}\end{tabular} & \begin{tabular}[c]{@{}c@{}}{\bfseries weight decay} \\ {\bfseries  ($\texttt{wd}^{(a)}, \texttt{wd}^{(h)}$) }\end{tabular}
		\\ \hline
		Nonlinear pendulum  & $[8,8,8]$    & 2  & $ 32$ & $(10^{-5},10^{-5})$  \\ \hline
		Nonlinear oscillator& $ [8,8,8]$  & 2  & $ 32$ & $(10^{-5},10^{-5})$  \\ \hline
		Lotka-Volterra  & $ [8,8,8]$  & $4$& $64$  & $\left(10^{-5},10^{-4}\right)$ \\ \hline
		Nonlinear Schrodinger& $ [12,12,12]$    &    4& $ 32$  &        $(10^{-5},10^{-3})$ \\ \hline
		Linear Wave & $ [12,12,12]$ & 6 &  $ 32$ &  $(10^{-5},10^{-5})$  \\ \hline
	\end{tabular}
\end{table}

\section{Quadratic-Decoder and Convolutional Neural Network-Based Decoder}\label{appendix:decoder}
Here, we discuss the construction of solutions on the designated spatial domain using the  low-dimensional POD coordinates. A simple approach is a linear decoder in which the POD basis are used to reconstruct the solution. In the following, we discus quadratic-decoder and convolutional neural network (CNN) based decoder. 
\paragraph{Quadratic-decoder:} Following \cite{jain2017quadratic,geelen2023operator}, we pose the following optimization problem, aiming to reconstruct the solution on the entire domains:
\begin{equation}
	\min_{\bV, \bH} \sum_{i=1}^\cN \|\bx_i - \bV\by_i - \bH\left(\by_i\otimes \by_i\right)\|,
\end{equation}
where $\bx_i$ and $\by_i$ are the $i$th instance of the full domain solution and the corresponding POD coordinates, respectively and the norm $\|\cdot\|$ in the above equation is a combination of Frobenius norm and $L_1$-norm with an equal weight-age. Since a quadratic function is to reconstruct the full-domain solution, we refer to it as a quad-decoder.  

\paragraph{CNN based decoder:} Benefiting with the success of deep learning, we also propose to reconstruct the full-domain solutions by means of CNNs. It takes the POD coordinates as inputs and provides an output as the full-domain solution. Precisely, we set up the following optimization problem:
\begin{equation}
	\min_\eta \sum_{i=1}\|\bx_i - \cN_\eta(\by_i)\|,
\end{equation}
where $\cN$ is a CNN, and its trainable parameters are denoted by $\eta$, and the norm $\|\cdot\|$ in the above equation is a combination of Frobenius norm and $L_1$-norm with an equal weight-age. We propose the architecture of a CNN  as in \Cref{fig:cnn_decoder}.
\begin{figure}
	\centering
	\includegraphics[width=0.75\linewidth]{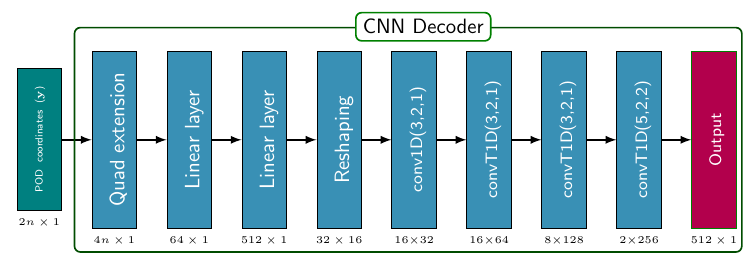}
	\caption{The figure summarises the architecture of the CNN decoder. In the figure, the quad-extension block concatenates the input and its Hadamard product, i.e., $[\by,\by\circ \by]$, and
		convT1D$(k,s,p)$ is a 1D transpose convolution layer with transpose kernel size $k$, stride size $s$, padding size $p$. We have used an output padding size of $1$ to obtain a symmetric auto-encoder structure in 1D transpose convolution layers. We denote the size of the output block below each block. }
	\label{fig:cnn_decoder}
\end{figure}
The parameters of these two decoders are optimized using the Adam optimizer for $600$ epochs. We also set the initial learning rate to $10^{-3}$, which is reduced by a factor of ten after each $250$ epochs. We also use weight-decay with a weightage of $10^{-5}$ to avoid over-fitting. Moreover, we have set the batch-size as $32$.

%

\end{document}